\documentclass{article}

\PassOptionsToPackage{numbers, compress}{natbib}

\usepackage[preprint]{neurips_2026}

\usepackage[utf8]{inputenc} 
\usepackage[T1]{fontenc}    
\usepackage{hyperref}       
\usepackage{url}            
\usepackage{booktabs}       
\usepackage{amsfonts}       
\usepackage{nicefrac}       
\usepackage{microtype}      
\usepackage{xcolor}         

\usepackage{booktabs}
\usepackage{algorithm}
\usepackage{algpseudocode}
\usepackage{amsthm}
\usepackage{mathtools}
\usepackage{amssymb}
\usepackage{cleveref}
\usepackage[colorinlistoftodos, prependcaption, textsize=tiny]{todonotes}
\usepackage{amsmath, amssymb, amsthm}
\usepackage{graphicx}
\usepackage{enumitem}
\usepackage{subcaption}
\usepackage{wrapfig}
\usepackage[acronym]{glossaries}
\glsdisablehyper

\theoremstyle{remark}

\theoremstyle{plain} 
\newtheorem{theorem}{Theorem}
\newtheorem{proposition}{Proposition}

\title{CIG: Exploration via Conditional Information Gain}

\author{%
  Tim Joseph \\
  FZI\\
  Karlsruhe, Germany \\
  \texttt{joseph@fzi.de} \\
  \And
   Marcus Fechner \\
   KIT  \\
   Karlsruhe, Germany \\
  \texttt{marcus.fechner@kit.edu} \\
   \AND
   Philipp Stegmaier \\
   KIT  \\
   Karlsruhe, Germany \\
   \And
   Karam Daaboul \\
   FZI\\
   Karlsruhe, Germany \\
   \And
   J. Marius Zöllner \\
   KIT  \\
   Karlsruhe, Germany \\
}

\begin{document}
\newacronym{cig}{CIG}{Conditional Information Gain}

\maketitle

\begin{abstract}
Intrinsic rewards for exploration in reinforcement learning condition on different contexts: lifelong rewards score each transition against accumulated experience but ignore within-rollout redundancy; episodic rewards penalize intra-trajectory repetition but discard lifetime progress. Hybrid methods combine both signals through heuristic weights or require Gaussian-process dynamics that do not scale beyond low-dimensional state spaces. Trajectory-level information gain decomposes into per-step terms that condition on the replay buffer and rollout prefix simultaneously, but remains intractable for deep models. We derive the \textbf{C}onditional \textbf{I}nformation \textbf{G}ain (CIG) reward as a tractable surrogate: a log-determinant objective over an ensemble disagreement kernel whose Cholesky factorization yields causal per-step rewards that retain both conditioning sets while scaling to high-dimensional state spaces. We instantiate CIG in a model-based setting, where rollouts are short and within-rollout corrections remain largely unexplored. Across twelve tasks spanning discrete (MiniGrid) and continuous control (OGBench), in both clean and stochastic-distractor settings, CIG outperforms or matches prior exploration methods while remaining robust to stochastic distractors.
\end{abstract}

\section{Introduction}
\label{sec:introduction}

Reinforcement learning agents in environments with sparse or delayed rewards must explore: they must visit states whose outcomes are not yet predictable from past experience. Intrinsic rewards formalize this pressure by assigning a bonus to transitions that are informative about the environment~\cite{schmidhuber_formal_1991, 10.3389/neuro.12.006.2007}, whether combined with extrinsic reward during task-directed training~\cite{pathakCuriosityDrivenExplorationSelfSupervised2017, burdaExplorationRandomNetwork2019} or used as the sole objective during unsupervised pre-training of a world model~\citep{sekarPlanningExploreSelfSupervised2020, hafnerMasteringDiverseControl2025}. The intrinsic reward controls which transitions the agent collects and therefore which dynamics the world model learns.

In model-based RL the policy is optimised over short imagined rollouts~\cite{haRecurrentWorldModels2018a, hafnerLearningLatentDynamics2019, hafnerMasteringAtariDiscrete2021}, and two sources of context are available for scoring a transition within such a rollout (Figure~\ref{fig:conditioning}). \emph{Lifelong} rewards condition on the replay buffer~\citep{pathakCuriosityDrivenExplorationSelfSupervised2017, burdaExplorationRandomNetwork2019, pathakSelfSupervisedExplorationDisagreement2019, sekarPlanningExploreSelfSupervised2020}: they assign full credit to every transition that probes a model gap, regardless of whether the current rollout has already probed it, so an imagined trajectory in a maze can spend every step revisiting the same uncertain corridor junction while neglecting independent frontiers elsewhere. \emph{Episodic} rewards condition on the rollout prefix~\citep{henaffExplorationEllipticalEpisodic2022, henaffStudyGlobalEpisodic2023}: they penalise intra-trajectory repetition but treat a familiar region and a genuinely uncertain one identically whenever both are new within the rollout. These methods assume long real episodes that provide a rich prefix for the redundancy signal; we are not aware of an extension to the short imagined rollouts of model-based RL. Existing hybrids combine the two signals through heuristic weights~\citep{badiaNeverGiveLearning2020, zhangNovelDSimpleEffective2021} or require Gaussian-process dynamics restricted to low-dimensional state spaces~\citep{mehtaExplorationPlanningInformation2022}.

Trajectory-level information gain provides an alternative that conditions on both the buffer and the prefix simultaneously: its chain-rule decomposition yields per-step rewards in which each term is conditioned on the replay buffer through the trained ensemble and on the rollout prefix through the preceding steps~\citep{10.1214/aoms/1177728069}. Computing these terms, however, requires marginalising over a posterior on model parameters that is intractable for deep networks.

We derive the \textbf{Conditional Information Gain (CIG)} reward, a tractable surrogate for this objective. When the posterior over model parameters is approximated by a finite ensemble of one-step predictors, the trajectory-level information gain reduces to a log-determinant over a kernel of pairwise ensemble disagreements (Eq.~\ref{eq:gram_matrix}). We identify a rank-saturation bottleneck of the na\"ive surrogate and resolve it through a trace reduction that raises the effective capacity from $M{-}1$ to $\min(T,\,(M{-}1)d)$. The Cholesky factorisation of this log-determinant then yields causal per-step rewards: each step's reward equals its total disagreement minus the portion already explained by preceding steps (Eq.~\ref{eq:cig_reward}). CIG plugs into an existing model-based backbone without additional hyperparameters; the only new quantity, the aleatoric scale, is estimated post-hoc from training residuals. Across twelve tasks spanning discrete (MiniGrid) and continuous control (OGBench), in both clean and stochastic-distractor settings, CIG outperforms or matches all baselines while remaining robust to stochastic distractors.

We contribute the CIG reward itself (\S\ref{sec:method}), a per-step intrinsic reward derived from a log-determinant surrogate of trajectory-level information gain that conditions on both the replay buffer and the rollout prefix, together with a rank-saturation analysis that motivates the trace reduction resolving the capacity bottleneck of the finite-ensemble surrogate (\S\ref{sec:surrogate}). We evaluate CIG on twelve tasks spanning discrete and continuous-control domains, in both clean and stochastic-distractor variants, using a reward-free protocol that isolates exploration quality from extrinsic reward exploitation; CIG is the only method in the top tier on every task, with an aggregate normalised IQM exceeding the next-best baseline by a clear margin (\S\ref{sec:experiments}).
\section{Background}
\label{sec:background}
\subsection{Setup}
\label{sec:bg_setup}
We consider an agent interacting with a reward-free Markov decision process $\mathcal{M} = (\mathcal{S}, \mathcal{A}, P, \gamma)$, where $\mathcal{S} \subseteq \mathbb{R}^d$ is the state space, $\mathcal{A}$ the action space, $P(s' \mid s, a)$ the unknown transition distribution, and $\gamma \in [0, 1)$ a discount factor. At each step the agent samples $a_t \sim \pi_\theta(\cdot \mid s_t)$, observes $s_{t+1} \sim P(\cdot \mid s_t, a_t)$, and appends the transition to a replay buffer~$\mathcal{D}$. The agent's objective is pure exploration: in the absence of any extrinsic reward, it must collect transitions that are maximally informative about the unknown dynamics~$P$. To this end it constructs an intrinsic reward~$r_t$ and optimises $J(\pi_\theta) = \mathbb{E}_{\pi_\theta}\!\bigl[\sum_{t \geq 0} \gamma^{t} r_t\bigr]$.

\subsection{Intrinsic Reward Design and Prior Work}
\label{sec:bg_proxies}
\begin{figure}
    \centering
    \includegraphics[width=\linewidth]{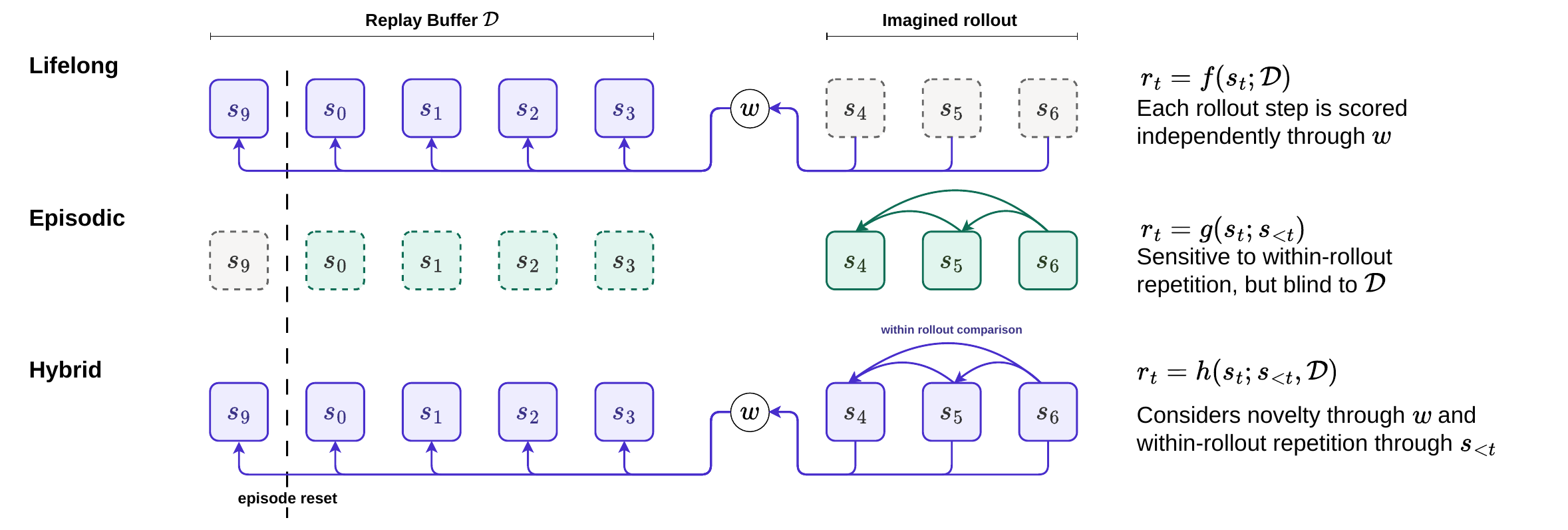}
    \caption{\textbf{Conditioning contexts of intrinsic rewards in model-based RL.} The policy plans over short imagined rollouts. Neural network weights $w$, trained on the replay buffer $\mathcal{D}$, summarize lifetime uncertainty. Solid states and arrows mark the contexts each reward class conditions on; dashed/faded elements are not used as context. In the model-based setting, the rollout prefix $s_{<t}$ is too short to supply a rich episodic signal. CIG (\S\ref{sec:method}) instantiates the hybrid form.}
    \label{fig:conditioning}
\end{figure}

The reward $r_t$ must quantify how informative a transition is, yet informativeness depends on context. Two sources of context are available at step~$t$: the replay buffer~$\mathcal{D}$, which summarizes lifetime experience and determines which dynamics remain uncertain, and the rollout prefix~$s_{<t}$, which records what the current trajectory has already probed. In model-based RL, where the policy optimizes over short imagined rollouts ($T \approx 15$), the prefix represents what the agent \emph{plans} to probe rather than what it has already visited in a real episode. Prior intrinsic rewards differ in which of these two contexts they condition on (Figure~\ref{fig:conditioning}); we summarize the three resulting categories below and provide a comprehensive treatment of related work in Appendix~\ref{sec:related_work}.

\textbf{Lifelong rewards}, $r_t = f(s_t;\, \mathcal{D})$, score each transition against accumulated experience through prediction error~\citep{pathakCuriosityDrivenExplorationSelfSupervised2017, burdaExplorationRandomNetwork2019}, ensemble disagreement~\citep{pathakSelfSupervisedExplorationDisagreement2019}, or density estimates~\citep{bellemareUnifyingCountBasedExploration2016}. Each step is scored independently of all other steps in the rollout, so correlations between disagreement patterns at different time steps are invisible to the reward: correlated steps receive full cumulative credit, a structural redundancy that conditioning on~$\mathcal{D}$ alone cannot detect.
\textbf{Episodic rewards}, $r_t = g(s_t;\, s_{<t})$, penalize within-trajectory repetition~\citep{savinovEpisodicCuriosityReachability2019, henaffExplorationEllipticalEpisodic2022, joLECOLearnableEpisodic2022a, henaffStudyGlobalEpisodic2023} but condition on the prefix alone, without access to the lifetime uncertainty encoded in~$\mathcal{D}$. These methods were developed for model-free RL, where long real episodes supply a rich prefix for the redundancy signal. In the model-based setting we consider, the prefix is a short imagined rollout and carries far less information, limiting the episodic correction.
\textbf{Hybrid rewards} target the joint form $r_t = h(s_t;\, s_{<t}, \mathcal{D})$. Several methods multiply or gate an episodic indicator with a lifelong bonus~\citep{badiaNeverGiveLearning2020, badiaAgent57OutperformingAtari2020, raileanuRIDERewardingImpactDriven2020, zhangNovelDSimpleEffective2021}, though all set the combination weights heuristically rather than deriving them from a unified objective. Plan2Explore~\citep{sekarPlanningExploreSelfSupervised2020} maximizes per-step ensemble disagreement in a model-based setting but scores each step independently, placing it in the lifelong category by conditioning structure despite operating over planned rollouts. TIP~\citep{mehtaExplorationPlanningInformation2022} derives per-step rewards from the trajectory-level information-gain objective of Eq.~\eqref{eq:chain_rule}, conditioning on both~$\mathcal{D}$ and~$s_{<t}$, but relies on Gaussian-process dynamics and is restricted to low-dimensional state spaces. We share TIP's objective; the next subsection formalises it, and \S\ref{sec:method} derives a tractable surrogate that scales to high-dimensional latent spaces.

\subsection{Information Gain as a Unifying Objective}
\label{sec:bg_infogain}

A principled objective that conditions on both $\mathcal{D}$ and $s_{<t}$ follows from treating pure exploration as information gain about the dynamics model. Let $w$ parameterise a probabilistic model of the unknown dynamics $P$, and let $p(w \mid \mathcal{D})$ denote the posterior after observing $\mathcal{D}$. We seek a policy $\pi^\star$ maximising the trajectory-level information gain $I(w;\, s_{1:T} \mid \mathcal{D})$. This objective decomposes via the chain rule of mutual information into per-step terms that condition on both contexts simultaneously:
\begin{equation}\label{eq:chain_rule}
    I(w;\, s_{1:T} \mid \mathcal{D})
    \;=\; \sum_{t=1}^{T}
    I\bigl(w;\, s_t \mid s_{<t},\, \mathcal{D}\bigr).
\end{equation}
Each summand scores step~$t$ against both the buffer~$\mathcal{D}$ and the prefix~$s_{<t}$, so the two conditioning contexts that \S\ref{sec:bg_proxies} identified as complementary appear as joint requirements of a single objective. This sequential decomposition parallels the chain-rule factorisation of expected information gain in Bayesian optimal experimental design~\citep{10.1214/aoms/1177728069, 10.1214/ss/1177009939, 10.1214/23-STS915}. The obstacle is tractability: $p(w \mid \mathcal{D})$ admits no closed form for deep models, and each per-step term requires marginalising over~$w$ to obtain the predictive distribution over~$s_t$. The next section derives a tractable surrogate that retains both conditioning sets while scaling beyond the low-dimensional settings to which TIP is confined.

\section{Method}
\label{sec:method}

We derive the \textbf{Conditional Information Gain (CIG)} reward as a tractable surrogate for the per-step information gain $I(w;\,s_t \mid s_{<t},\,\mathcal{D})$ identified in \S\ref{sec:bg_infogain}, preserving its conditioning on both the replay buffer and the rollout prefix while scaling to high-dimensional latents. Following Plan2Explore~\citep{sekarPlanningExploreSelfSupervised2020}, we instantiate the dynamics model as an ensemble of one-step predictors over the latent of a learned world model~\citep{haRecurrentWorldModels2018a, hafnerLearningLatentDynamics2019, hafnerMasteringAtariDiscrete2021}; from here on $s_t$ refers to this latent, and all downstream quantities are computed in the latent space. Backbone details are deferred to \S\ref{sec:exp_setup}.

The derivation has three steps. We first replace the intractable posterior with a finite ensemble, reducing trajectory information gain to a marginal-entropy problem (\S\ref{sec:marginal_entropy}). A Gaussian bound on the mixture entropy exposes a rank bottleneck of the finite ensemble, which a trace reduction resolves to yield the CIG surrogate (\S\ref{sec:surrogate}). The Cholesky factorisation of the surrogate then decomposes it into causal per-step rewards (\S\ref{sec:reward_derivation}).

\subsection{From Information Gain to Marginal Entropy}
\label{sec:marginal_entropy}

Two standard approximations make the trajectory information gain tractable. \textbf{A1 (ensemble posterior)} replaces the posterior $p(w \mid \mathcal{D})$ with a uniform distribution over $M$ independently trained ensemble members~\citep{lakshminarayananSimpleScalablePredictive2017, sekarPlanningExploreSelfSupervised2020}, indexed by $k$, so that the buffer conditioning is absorbed into training and dropped from the notation. \textbf{A2 (Markov-Gaussian dynamics)} factorises each member's trajectory into Gaussian one-step transitions with mean $\mu_k(s_t, a_t)$ and isotropic aleatoric covariance $\sigma^2 I_d$ shared across members. Under A2, evaluating each member along a fixed imagined trajectory gives a joint predictive that factorises across steps:
\begin{equation}\label{eq:joint_conditional}
    p(s_{1:T} \mid k) = \mathcal{N}\!\bigl(\boldsymbol{\mu}_k,\, \sigma^2 I_{Td}\bigr), \qquad \boldsymbol{\mu}_k = [\mu_k(s_0, a_0);\, \dots;\, \mu_k(s_{T-1}, a_{T-1})] \in \mathbb{R}^{Td}.
\end{equation}
The conditional entropy $H(s_{1:T} \mid k)$ is therefore a constant in the policy, depending only on the aleatoric variance. Since $I(k;\,s_{1:T}) = H(s_{1:T}) - H(s_{1:T}\mid k)$, maximising information gain reduces to maximising the marginal entropy of a Gaussian mixture:
\begin{equation}\label{eq:reduction_to_marginal}
    \operatorname*{arg\,max}_\pi I(k;\,s_{1:T}) = \operatorname*{arg\,max}_\pi H(s_{1:T}), \qquad p(s_{1:T}) = \frac{1}{M}\sum_{k=1}^{M} \mathcal{N}\!\bigl(\boldsymbol{\mu}_k,\, \sigma^2 I_{Td}\bigr).
\end{equation}

\subsection{Gaussian Surrogate and Capacity Saturation}
\label{sec:surrogate}

The mixture entropy in Eq.~\eqref{eq:reduction_to_marginal} has no closed form. By approximation \textbf{A3 (moment-matched Gaussian)}, we replace the mixture with a Gaussian of matching mean and covariance $\Sigma$ (see Appendix~\ref{app:entropy_bound}). Since the Gaussian maximises entropy among distributions with covariance $\Sigma$:
\begin{equation}\label{eq:gaussian_bound}
    H(s_{1:T}) \;\leq\; H\!\bigl(\mathcal{N}(\,\cdot\,,\,\Sigma)\bigr) \;=\; \tfrac{1}{2}\log\det(2\pi e\,\Sigma), \qquad \Sigma = C + \sigma^2 I_{Td}.
\end{equation}
The epistemic part $C \in \mathbb{R}^{Td \times Td}$ has $d \times d$ blocks
\begin{equation}\label{eq:epistemic_block}
    C_{jt} = \frac{1}{M}\sum_{k=1}^{M} \boldsymbol{\delta}_k^{(j)} \bigl(\boldsymbol{\delta}_k^{(t)}\bigr)^{\!\top}, \qquad \boldsymbol{\delta}_k^{(t)} = \mu_k(s_{t-1}, a_{t-1}) - \bar{\mu}(s_{t-1}, a_{t-1}),
\end{equation}
collecting the deviations of each member from the ensemble mean $\bar{\mu} = \tfrac{1}{M}\sum_k \mu_k$.
Maximising this bound directly fails in high dimension. Because $C$ is a sum of $M$ centred outer products, $\operatorname{rank}(C) \leq M-1$ regardless of the latent dimension $d$. A1 has reduced exploration to identifying which of $M$ members generated the trajectory, a problem with at most $\log M$ bits of capacity, so a single observation can saturate the surrogate and leave nothing to reward at later steps. The fix is design choice \textbf{D1 (trace reduction)}: we collapse each $d \times d$ block of $C$ to its trace, replacing the full epistemic covariance with the kernel
\begin{equation}\label{eq:gram_matrix}
    K_{jt} = \operatorname{tr}(C_{jt}) = \frac{1}{M}\sum_{k=1}^{M} \bigl(\boldsymbol{\delta}_k^{(j)}\bigr)^{\!\top} \boldsymbol{\delta}_k^{(t)}.
\end{equation}
This raises the effective capacity from $M-1$ to $\min(T,(M{-}1)d)$ (Theorem~\ref{thm:saturation}, Appendix~\ref{app:capacity_saturation}), at the cost of being blind to off-diagonal correlations whose traces vanish. Substituting $K$ into the bound and dropping policy-independent constants gives the \textbf{CIG surrogate}
\begin{equation}\label{eq:approx_objective}
    \tilde{J}(s_{1:T}) = \log\det\tilde{K}, \qquad \tilde{K} \coloneqq K + \sigma^2 d \cdot I_T,
\end{equation}
in which the ridge $\sigma^2 d$ aggregates the aleatoric contribution across the latent dimensions. After D1, $\tilde{J}$ is no longer a formal upper bound on the information gain but retains the structural properties needed of a trajectory-level objective, monotonicity in the PSD order and concavity in $K$ (Appendix~\ref{app:capacity_saturation}).

\subsection{Per-Step Decomposition: The CIG Reward}
\label{sec:reward_derivation}

\begin{figure}
    \centering
    \includegraphics[width=1.0\linewidth]{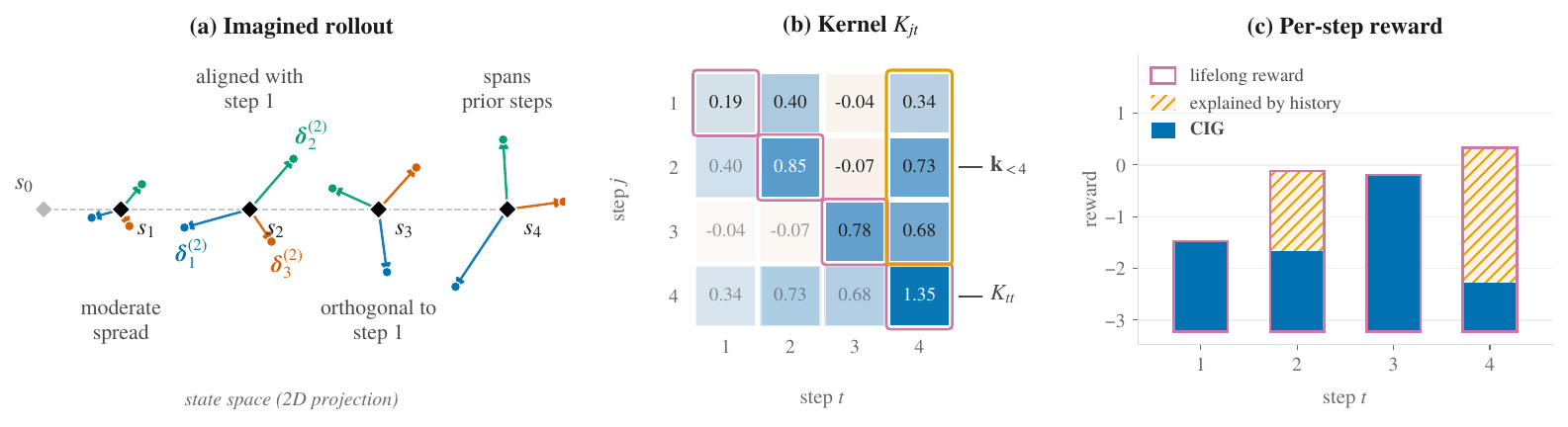}
    \caption{\textbf{Mechanism of the CIG reward on a four-step imagined rollout ($T{=}4$, $M{=}3$).} \textbf{(a)}~Disagreement vectors $\boldsymbol{\delta}_k^{(t)}$ at each step. \textbf{(b)}~The kernel $K_{jt}$ (Eq.~\ref{eq:gram_matrix}) built from their inner products; the column $\mathbf{k}_{<4}$ links step~4 to its prefix. \textbf{(c)}~Eq.~\ref{eq:cig_reward} subtracts the prefix-explained portion from the lifelong term: step~3, nearly orthogonal to the prefix, retains most of its bonus; step~4, largely spanned by the prefix $s_{<4}$, is sharply discounted.}
    \label{fig:method}
\end{figure}

The trajectory objective $\tilde{J}$ admits an exact per-step decomposition via the Cholesky factorisation, making it directly compatible with actor-critic training without trajectory-level value heads or high-variance estimators. Writing $\tilde{K} = LL^\top$ gives $\log\det\tilde{K} = 2\sum_t \log L_{tt}$, with each $L_{tt}^2$ equal to the Schur complement of the leading $(t{-}1) \times (t{-}1)$ submatrix. Setting $r_t = 2\log L_{tt}$ recovers $\tilde{J}(s_{1:T}) = \sum_t r_t$ exactly, with $r_t$ depending only on steps $1,\dots,t$:
\begin{equation}\label{eq:cig_reward}
    r_t = \log\!\Bigl( \underbrace{K_{tt}}_{\text{Lifelong}} + \sigma^2 d \;-\; \underbrace{\mathbf{k}_{<t}^\top\, \tilde{K}_{<t}^{-1}\, \mathbf{k}_{<t}}_{\text{Prefix Redundancy}} \Bigr),
\end{equation}
where $\mathbf{k}_{<t}$ collects the kernel entries linking step $t$ to its prefix, and $\tilde{K}_{<t}$ is the prefix submatrix. The underlying log-determinant surrogate recovers the form of D-optimal design~\citep{doi:10.1137/1.9780898719109} for deep ensembles.

Eq.~\eqref{eq:cig_reward} composes three mechanisms. \textbf{Lifelong disagreement:} the diagonal $K_{tt}$ is the per-step ensemble disagreement, the same signal as Plan2Explore, carrying the buffer conditioning through the trained ensemble. \textbf{Aleatoric compression:} the ridge $\sigma^2 d$ damps small disagreements toward a constant floor $\log(\sigma^2 d)$, concentrating credit on transitions whose disagreement clearly exceeds the predictor's own noise scale; this is a substantive departure from Plan2Explore, whose bonus scales with $K_{tt}$ alone. \textbf{Prefix redundancy:} the subtracted quadratic form removes the portion of step-$t$ disagreement already explained by the prefix, recovering the chain-rule conditioning on $s_{<t}$; when the step-$t$ disagreement is orthogonal to the prefix, $r_t$ keeps the full lifelong bonus, and when it lies in the prefix span, $r_t$ falls to the aleatoric floor (Figure~\ref{fig:method}c; Proposition~\ref{prop:reward_limits}). The two failure modes of \S\ref{sec:bg_proxies} are addressed symmetrically, with the prefix term preventing the overcounting that lifelong rewards suffer on correlated frontiers and the diagonal preventing the uncertainty-blindness of episodic rewards in familiar-but-uncertain regions. Appendix~\ref{app:prefix_redundancy} characterises this selectivity quantitatively: the correction concentrates on frontier steps where $K_{tt}$ is large, leaves low-disagreement steps untouched, and sharpens as training narrows the frontier.

\subsection{Reward Computation}
\label{sec:training}

\paragraph{Aleatoric scale.} The ensemble is trained by MSE and carries no explicit noise parameter. We estimate $\sigma^2$ post-hoc from ensemble-mean residuals on training transitions and substitute the estimate into Eq.~\eqref{eq:cig_reward} at reward-computation time (Appendix~\ref{app:sigma_estimation}). The ensemble training protocol is identical to Plan2Explore~\citep{sekarPlanningExploreSelfSupervised2020}, so that differences against this baseline isolate the effect of the reward rather than changes to the learned representation.

\paragraph{Complexity.} The per-step rewards drive a standard actor-critic on imagined rollouts (\S\ref{app:impl}). After the ensemble forward passes, assembling the $T \times T$ kernel $K$ from the deviation vectors costs $\mathcal{O}(T^2 M d)$ operations, and the Cholesky factorisation of $\tilde{K}$ costs $\mathcal{O}(T^3)$; the per-step rewards are then read from the diagonal of $L$. Both costs are negligible relative to the $\mathcal{O}(TMd)$ ensemble forward passes for the rollout length we consider ($T = 15$). Pseudocode is given in Algorithm~\ref{alg:cig} (see Appendix~\ref{app:algorithm}).
\section{Experiments}
\label{sec:experiments}

We evaluate CIG on twelve tasks across discrete (Minigrid~\citep{chevalier-boisvert2023minigrid}) and continuous-control (OGBench~\citep{parkOGBenchBenchmarkingOffline2025} domains, including both clean environments and Noisy-TV variants with action-dependent stochastic distractors, to test three claims: that CIG improves exploration coverage over both lifelong and episodic baselines under matched compute and tuning budget; that the ensemble-grounded reward is robust to distractors that degrade count-based and entropy-based signals; and that each component of the CIG reward, the trace reduction~D1, the prefix correction of Eq.~\eqref{eq:cig_reward}, and the aleatoric ridge~$\hat{\sigma}^2$, contributes to the observed gains. Sections~\ref{sec:exp_minigrid}--\ref{sec:exp_cc} address the first two claims per domain, \S\ref{sec:exp_aggregate} reports aggregate statistics, and \S\ref{sec:ablations} isolates individual components.

\subsection{Experimental Setup}
\label{sec:exp_setup}
 
All methods are implemented on top of DreamerV2~\citep{hafnerMasteringAtariDiscrete2021} so that the intrinsic reward is the only axis of variation. Ensemble and MLP architectures, optimizer, and optimizer hyperparameters are identical across methods; baseline-specific parameters (e.g., ICM forward-model coefficient, E3B ridge scale) use the defaults reported in the original publications without further tuning. CIG introduces no additional hyperparameters beyond those of the shared ensemble: the aleatoric scale~$\hat{\sigma}^2$ is estimated post-hoc from ensemble-mean residuals on training transitions (\S\ref{sec:training}), not tuned. The tuning budget is therefore
identical across CIG and P2E; any performance difference is attributable to the reward computation alone. We compare against six baselines spanning the lifelong/episodic axis of \S\ref{sec:bg_proxies}: P2E~\citep{sekarPlanningExploreSelfSupervised2020}, RND~\citep{burdaExplorationRandomNetwork2019}, ICM~\citep{pathakCuriosityDrivenExplorationSelfSupervised2017}, APT~\citep{liuBehaviorVoidUnsupervised2021}, E3B~\citep{henaffExplorationEllipticalEpisodic2022} and E3B~$\times$~P2E~\citep{henaffStudyGlobalEpisodic2023} . We evaluate on twelve tasks across two domains: six MiniGrid~\citep{chevalier-boisvert2023minigrid} tasks (three clean, three Noisy-TV) testing procedurally generated combinatorial exploration, and six continuous-control tasks (four clean, two Noisy-TV) testing spatial coverage, multi-object manipulation, and combinatorial configuration coverage. The Noisy-TV (NT) variants augment observations with an action-dependent stochastic distractor. Each (method, task) pair is run with five seeds; we report the inter-quartile mean (IQM) with 95\,\% stratified bootstrap confidence intervals ~\citep{agarwalDeepReinforcementLearning2021}. Full hyperparameter tables, environment details, baseline implementations, and compute budget are given in Appendix~\ref{app:implementation}.

\subsection{MiniGrid}
\label{sec:exp_minigrid}

\begin{figure}[t]
\centering
\includegraphics[width=\linewidth]{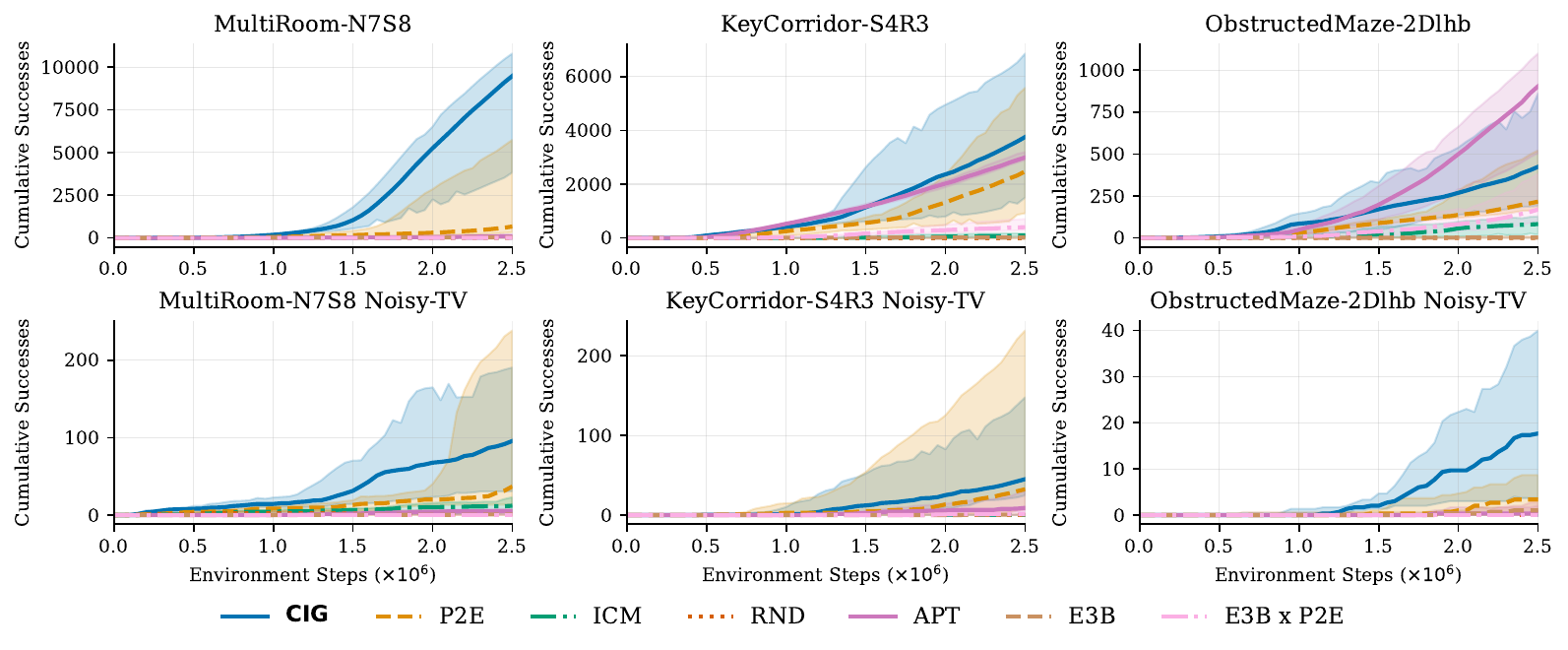}
\caption{%
Cumulative successes (total completed episodes) on three MiniGrid tasks (columns) in clean (top) and Noisy-TV (bottom) variants. Lines show the IQM across five seeds; shaded regions are 95\,\% stratified bootstrap confidence intervals. Under Noisy-TV distractors, non-ensemble methods drop to near-zero success rates while CIG and P2E accumulate successes throughout training.%
}
\label{fig:minigrid}
\end{figure}

MiniGrid~\citep{chevalier-boisvert2023minigrid} provides procedurally generated gridworlds in which layouts and object positions are resampled every episode, and solving a task requires interacting with objects in a specific order. We evaluate on three distinct tasks (MultiRoom~N7S8, KeyCorridor~S4R3, and ObstructedMaze~2DlhB), each in a clean and a Noisy-TV variant, and report cumulative successes over training. Environment details are in Appendix~\ref{sec:minigrid-envs}.

\paragraph{Clean variants.}
Figure~\ref{fig:minigrid} (top row) shows cumulative successes on the three clean tasks. CIG and P2E share the same ensemble, so the gap between them is attributable to the reward computation alone. On MultiRoom~N7S8, CIG accumulates an order of magnitude more successes than P2E by the end of training, and its success rate continues to accelerate where P2E plateaus. On KeyCorridor~S4R3, CIG and APT accumulate successes at a comparable rate throughout training, with P2E trailing both but continuing to improve; the three methods' confidence intervals overlap toward the end of the budget. On ObstructedMaze~2DlhB, APT accumulates roughly twice the successes of CIG: its state-entropy proxy produces high intra-episode state diversity, which is effective when uniform coverage aligns with the task structure. A per-method entropy analysis (Appendix~\ref{app:entropy_success}) shows that high visit-distribution entropy alone does not predict downstream performance: APT and E3B achieve the highest entropy across tasks, yet only APT on ObstructedMaze translates this into an clear advantage over other baselines, while CIG outperforms with lower entropy on the other two tasks, pointing to targeted uncertainty reduction as the driving factor. E3B~$\times$~P2E trails standalone P2E on MultiRoom and KeyCorridor, indicating that na\"ively multiplying an episodic indicator with a lifelong bonus can suppress the lifelong signal rather than refine it.

\paragraph{Noisy-TV variants.}
Ensemble disagreement vanishes for irreducible stochastic transitions, so CIG and P2E are immune to the distractor by construction. RND and APT score every novel observation regardless of source; ICM discards action-independent variation but not the action-dependent noise this variant introduces. Figure~\ref{fig:minigrid} (bottom row) confirms these predictions: RND, ICM, APT, E3B, and E3B~$\times$~P2E drop to near-zero successes on all three tasks. APT's clean-task advantage on ObstructedMaze does not survive the distractor. Only CIG and P2E retain non-trivial success counts, with CIG outpacing P2E on every task. Because both methods already filter irreducible noise through the ensemble, this persistent gap reflects the additional structure CIG's reward formulation captures beyond per-step disagreement.

\subsection{Continuous Control}
\label{sec:exp_cc}

\begin{figure}[t]
\centering
\includegraphics[width=\linewidth]{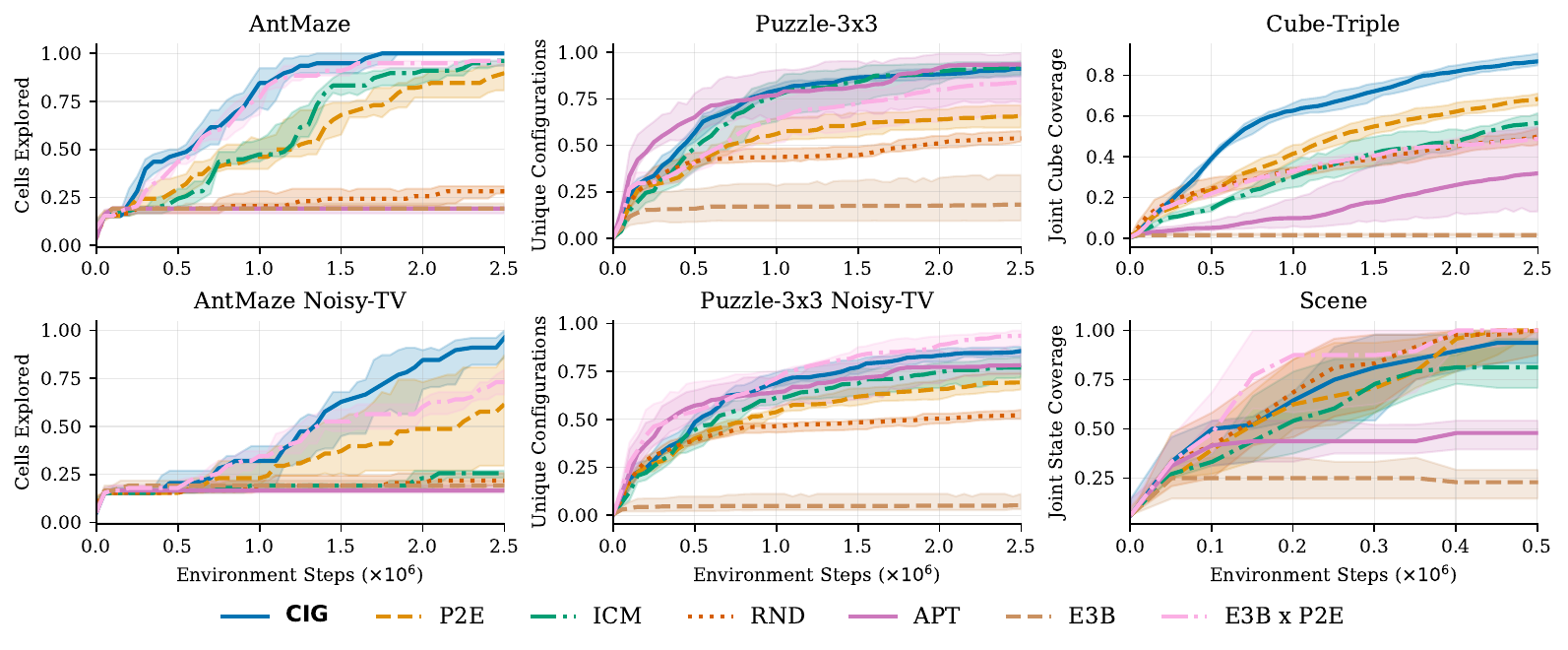}
\caption{%
Coverage curves on six continuous-control tasks: four clean (top row and bottom right) and two Noisy-TV (bottom left). Lines show the IQM across five seeds; shaded regions are 95\,\% stratified bootstrap confidence intervals. CIG is the only method that remains competitive across all six panels; other methods match or exceed CIG on individual tasks but degrade on at least one.%
}
\label{fig:continuous}
\end{figure}

We evaluate on four continuous-control tasks that pose complementary exploration challenges: spatial coverage under quadrupedal locomotion (AntMaze), where exploration is bottlenecked by learning an 8-DoF gait before coverage can accumulate; continuous multi-object configuration coverage (Cube-Triple); coverage of a combinatorial discrete configuration space where each button press toggles itself and its neighbours (Puzzle~$3{\times}3$); and coverage over a heterogeneous set of manipulation objects (Scene). AntMaze and Puzzle~$3{\times}3$ additionally admit Noisy-TV variants. Task-specific coverage metrics are defined in Appendix~\ref{sec:ogbench-envs}.

\paragraph{Clean variants.}
Figure~\ref{fig:continuous} (top row and bottom right) shows coverage on the four clean tasks. On AntMaze, CIG saturates coverage first; E3B~$\times$~P2E follows closely, and P2E and ICM trail with overlapping confidence intervals. APT, E3B, and RND plateau early at low coverage, never progressing beyond the initial region of the maze. On Cube-Triple, CIG opens the widest margin over all baselines: P2E is the nearest competitor, and E3B~$\times$~P2E falls below standalone P2E. Both AntMaze and Cube-Triple exhibit high within-rollout redundancy (consecutive steps often probe the same locomotion gap or move the robot instead of the cubes), and CIG's gains over P2E are largest in these settings.
On Puzzle~$3{\times}3$, CIG, APT, and ICM reach comparable final coverage with overlapping confidence intervals; E3B~$\times$~P2E is close behind, and P2E trails all four. Each button press tends to yield a distinct configuration and the buttons must be pressed in the right order to reach unexplored configurations. APT reaches similar coverage through a different mechanism: it accumulates roughly an order of magnitude more button flips than CIG (Appendix~\ref{app:puzzle_flips}), relying on interaction volume rather than selectivity. On Scene, the small configuration space is covered quickly by most methods. P2E and RND reach full coverage before CIG, bounding CIG's advantage to tasks where within-rollout correlation is non-trivial; when the exploration problem is already easy, CIG's more structured reward provides no additional benefit.

\paragraph{Noisy-TV variants.}
Figure~\ref{fig:continuous} (bottom left two panels) extends the distractor analysis of \S\ref{sec:exp_minigrid} to continuous control. On AntMaze Noisy-TV, CIG retains nearly all of its clean-task coverage; P2E and E3B~$\times$~P2E both degrade but remain above the plateau of RND, ICM, APT, and E3B, which stall at or near their initial coverage. CIG's lead over P2E on this panel mirrors the MiniGrid pattern: both methods filter the distractor through the ensemble, but CIG's reward formulation extracts a stronger signal from the remaining epistemic uncertainty. E3B~$\times$~P2E degrades more from its clean-task level than CIG or P2E do, but it does not collapse; its performance falls between P2E and CIG.
On Puzzle~$3{\times}3$ Noisy-TV, the distractor has a smaller effect than on other tasks: CIG, P2E, and RND retain most of their clean-task coverage, while APT and ICM degrade moderately. E3B~$\times$~P2E is the exception in both directions: it is the only method whose coverage increases relative to its clean variant, reaching the highest final coverage of any method on this panel. We do not have a clear explanation for this improvement; it is the only panel on which E3B~$\times$~P2E surpasses CIG. CIG's coverage drops modestly between variants, a smaller relative degradation than APT or ICM, confirming that CIG's mechanisms compose robustly without task-specific calibration.

\subsection{Aggregate analysis}
\label{sec:exp_aggregate}

\begin{figure}[t]
\centering
\includegraphics[width=\linewidth]{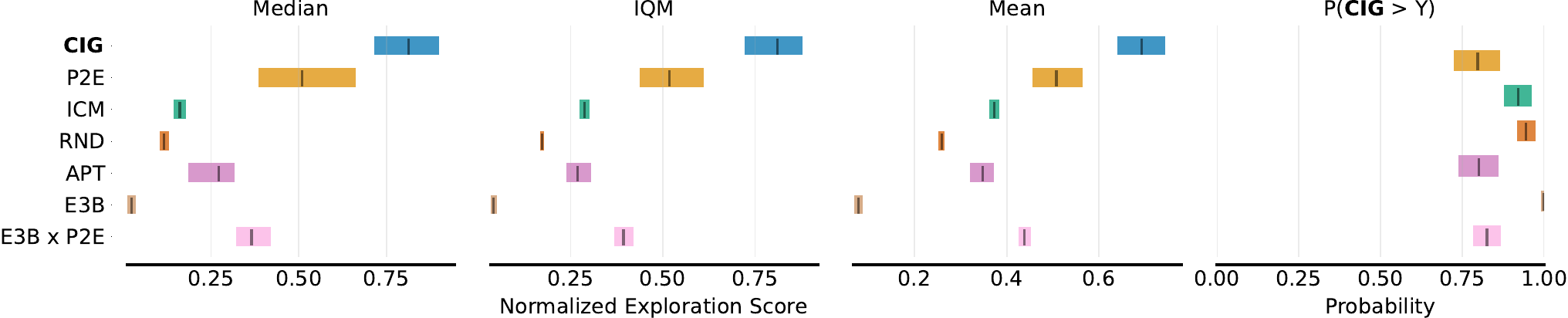}
\caption{Aggregate normalised exploration scores across all twelve tasks (six MiniGrid, six continuous control). Per-task scores are normalised to $[0,1]$ by the best score on that task. Left: median, IQM, and mean with 95\,\% stratified bootstrap confidence intervals over five seeds per (method,\,task) pair~\citep{agarwalDeepReinforcementLearning2021}. Right: probability that CIG outperforms each baseline on a randomly sampled task. CIG leads on all three aggregate statistics; its IQM confidence interval does not overlap with that of any other baseline.}
\label{fig:aggregate}
\end{figure}

The per-domain results of \S\ref{sec:exp_minigrid}--\ref{sec:exp_cc} show that CIG outpaces baselines on most individual tasks; we now ask whether this advantage is consistent across the full suite. Figure~\ref{fig:aggregate} (left) reports normalised scores across all twelve tasks. CIG leads on all three statistics with non-overlapping IQM confidence intervals against every other method. CIG's median and IQM are closely aligned, confirming that the advantage is not driven by a few high-scoring tasks; its lower mean reflects the tasks where CIG does not lead (ObstructedMaze clean, Puzzle~$3{\times}3$, Scene; \S\ref{sec:exp_minigrid}--\ref{sec:exp_cc}). P2E's three statistics cluster tightly, reflecting moderate but uniform performance. APT's mean exceeds its median, consistent with strong wins on a subset of tasks that do not transfer across the suite. E3B~${\times}$~P2E falls below standalone P2E in aggregate, confirming that heuristic combination does not compensate for the hybrid's failure modes on Cube-Triple and AntMaze Noisy-TV. 
E3B, the only purely episodic baseline, ranks last in aggregate across all three statistics, consistent with the short-rollout limitation identified in \S\ref{sec:bg_proxies}: with $T \approx 15$ imagined steps the prefix carries too little information for the episodic correction to produce a useful signal. Figure~\ref{fig:aggregate} (right) reports the probability of improvement~\citep{agarwalDeepReinforcementLearning2021}, the probability that CIG outscores a given baseline on a uniformly sampled task (0.5 = parity). All values exceed 0.79 with CIs bounded away from 0.5, confirming that CIG's advantage holds at the level of individual tasks.

\subsection{Ablations}
\label{sec:ablations}
\begin{wrapfigure}{r}{0.42\textwidth}
\centering
\vspace{-20pt}
\includegraphics[width=\linewidth]{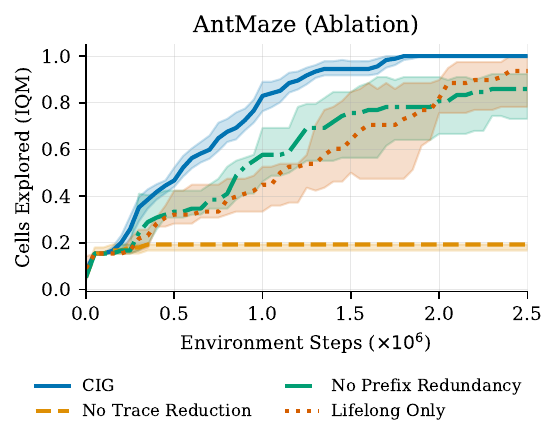}
\caption{Ablation on AntMaze. Each variant removes one component of the CIG reward (Eq.~\eqref{eq:cig_reward}). Lines show the IQM; shaded regions are 95\,\% stratified bootstrap confidence intervals.}

\label{fig:ablation}
\vspace{-10pt}
\end{wrapfigure}
We isolate the three components of Eq.~\eqref{eq:cig_reward} on AntMaze in Figure~\ref{fig:ablation}. Removing the trace reduction (\emph{No Trace Reduction}) is the most severe intervention: coverage flatlines early in training, with CIs that separate from all other variants and never recover. This confirms the rank-$(M{-}1)$ capacity saturation derived in \S\ref{sec:surrogate}: the surrogate's limited rank renders the reward insensitive to meaningful variation across steps, and the trace reduction restores sufficient fidelity for the policy to distinguish explored from unexplored regions. The prefix correction is the main source of sample efficiency. Both variants that remove it learn steadily but converge more slowly, with CIG's lower CI above both upper CIs through mid-training. On AntMaze, where consecutive steps revisit the same locomotion frontier, the prefix correction concentrates credit on steps that probe new directions in disagreement space. The ridge is not independently separable: the two variants (\emph{No Prefix Redundancy} and \emph{Lifelong Only}) that differ only by the ridge have overlapping CIs at every checkpoint, with point estimates that reverse the expected ordering by the end of training. This is consistent with the design of Eq.~\eqref{eq:cig_reward}, where the ridge sets the floor toward which the prefix correction compresses redundant steps; its effect is mediated by the prefix term and may be small when that term is absent.
\section{Conclusion}
\label{sec:conclusion}

Lifelong and episodic exploration rewards each discard half of the available context: the former ignores what has already been observed within a rollout, while the latter forgets what was learned across rollouts. Prior attempts to combine both signals have relied on heuristic products or Gaussian-process dynamics that do not scale. CIG shows that the two conditioning sets need not be stitched together post hoc: they emerge jointly from a single log-determinant surrogate of trajectory-level information gain, whose Cholesky factorisation produces causal, per-step rewards with no additional design choices. The practical consequence is robustness across problem structure. On tasks where count- and entropy-based baselines collapse (all Noisy-TV variants), the shared ensemble posterior filters irreducible noise; on tasks where most methods already perform well (Puzzle~$3{\times}3$, Scene), CIG reaches comparable coverage. Across the full twelve-task suite, CIG achieves the highest aggregate IQM with non-overlapping confidence intervals against every baseline (\S\ref{sec:exp_aggregate}). No other method avoids degradation across all conditions we tested.

\paragraph{Limitations and future work.}
CIG does not dominate on every task: APT outperforms it on ObstructedMaze (clean), and E3B$\times$P2E surpasses it on Puzzle~$3{\times}3$ Noisy-TV for reasons we do not yet fully understand. Characterising the problem structure under which CIG's surrogate is loose enough for simpler proxies to win is an open question. The evaluation is restricted to model-based RL with short imagined rollouts (DreamerV2, $T \approx 15$). In model-free settings with episodes spanning hundreds of steps, the $\mathcal{O}(T^3)$ Cholesky cost and $T \times T$ kernel storage may become prohibitive, and the posterior signal from a small ensemble ($M = 5$) would compound errors over longer horizons; sliding-window or low-rank kernel approximations are a natural next step. More broadly, the trace reduction discards directional structure in the epistemic covariance; retaining partial directional information without reintroducing the rank-saturation bottleneck of \S\ref{sec:surrogate} remains open.

\bibliographystyle{unsrtnat}
\bibliography{CIG}


\appendix

\clearpage
\appendix

\section{Implementation}
\label{app:implementation}

\subsection{Pseudocode}
\label{app:algorithm}

\begin{algorithm}[t]
\caption{Conditional Information Gain reward (Algorithm referenced in §\ref{sec:training}).}
\label{alg:cig}
\begin{algorithmic}[1]
\Require Imagined latent rollout $(s_0, a_0, \dots, s_{T-1}, a_{T-1})$;
         ensemble of one-step predictors $\{\mu_k\}_{k=1}^{M}$;
         aleatoric estimate $\hat{\sigma}^2$; latent dimension $d$.
\Ensure Per-step intrinsic rewards $r_{1:T}$.
\For{$t = 1, \dots, T$}
    \For{$k = 1, \dots, M$}
        \State $m_k^{(t)} \gets \mu_k(s_{t-1}, a_{t-1})$
            \Comment{ensemble forward pass (A2)}
    \EndFor
    \State $\bar{m}^{(t)} \gets \tfrac{1}{M}\sum_{k=1}^{M} m_k^{(t)}$
    \State $\boldsymbol{\delta}_k^{(t)} \gets m_k^{(t)} - \bar{m}^{(t)} \quad \text{for } k=1,\dots,M$
        \Comment{centred deviations, Eq.~\eqref{eq:epistemic_block}}
\EndFor
\State $K_{jt} \gets \tfrac{1}{M}\sum_{k=1}^{M}
        \bigl(\boldsymbol{\delta}_k^{(j)}\bigr)^{\!\top} \boldsymbol{\delta}_k^{(t)}
        \quad \text{for } 1 \le j, t \le T$
        \Comment{trace-reduced Gram (D1), Eq.~\eqref{eq:gram_matrix}}
\State $\tilde{K} \gets K + \hat{\sigma}^2 d \cdot I_T$
        \Comment{aleatoric ridge, Eq.~\eqref{eq:approx_objective}}
\State $L \gets \mathrm{Cholesky}(\tilde{K})$
        \Comment{$\tilde{K} = L L^{\!\top}$}
\For{$t = 1, \dots, T$}
    \State $r_t \gets 2\log L_{tt}$
        \Comment{Schur complement = Eq.~\eqref{eq:cig_reward}}
\EndFor
\State \Return $r_{1:T}$
\end{algorithmic}
\end{algorithm}

\begin{algorithm}[t]
\caption{Aleatoric scale update (Appendix~\ref{app:sigma_estimation}).}
\label{alg:sigma}
\begin{algorithmic}[1]
\Require Mini-batch MSE residuals $\{\epsilon_i\}_{i=1}^{B}$ between ensemble means and targets; decay $\beta \in [0,1)$; running estimate $\hat{\sigma}^2$ (initialised to $0$).
\State $\bar{\epsilon} \gets \tfrac{1}{B}\sum_{i=1}^{B} \epsilon_i$
\If{$\hat{\sigma}^2 = 0$} \Comment{first update: take the batch value directly}
    \State $\hat{\sigma}^2 \gets \bar{\epsilon}$
\Else
    \State $\hat{\sigma}^2 \gets \beta\,\hat{\sigma}^2 + (1-\beta)\,\bar{\epsilon}$
\EndIf
\end{algorithmic}
\end{algorithm}

Algorithm~\ref{alg:cig} consolidates the reward computation of
§\ref{sec:method} into explicit pseudocode and computes the per-step
rewards $r_{1:T}$ for a single imagined rollout. The three substantive steps mirror
§\ref{sec:method}: lines~5--10 evaluate the ensemble and form the
centred deviations $\boldsymbol{\delta}_k^{(t)}$ of
Eq.~\eqref{eq:epistemic_block} (assumptions A1--A2,
§\ref{sec:marginal_entropy}); line~11 contracts each $d \times d$
block of the epistemic covariance to its trace to obtain the kernel
$K$ of Eq.~\eqref{eq:gram_matrix} (design choice D1,
§\ref{sec:surrogate}); lines~12--16 recover the per-step rewards from
the Cholesky factorisation of the ridged kernel
$\tilde{K} = K + \hat{\sigma}^2 d \cdot I_T$. The Schur-complement
identity discussed in §\ref{sec:reward_derivation} guarantees that
$r_t = 2 \log L_{tt}$ realises Eq.~\eqref{eq:cig_reward} exactly.

Algorithm~\ref{alg:sigma} maintains the aleatoric estimate
$\hat{\sigma}^2$ used as the ridge in Algorithm~\ref{alg:cig}. As
described in §\ref{sec:training}, we estimate it post-hoc from the
MSE residuals of the ensemble mean on training transitions, and
update it with an exponential moving average with each training step;
the running buffer is initialised to zero, in which case the first
batch value is taken directly to avoid biasing the estimator toward
the initialisation.

\subsection{Architecture and Training}
\label{app:impl}

CIG and all baselines share a common DreamerV2
backbone~\citep{hafnerMasteringAtariDiscrete2021} and differ only in the form of the
intrinsic reward. CIG introduces no architectural changes to the backbone. 

\paragraph{Architecture.}
The world model follows the Recurrent State-Space Model
(RSSM)~\citep{hafnerMasteringAtariDiscrete2021} with a discrete $32 \times 32$ stochastic
state space. A convolutional encoder maps pixel observations to the
latent state; a convolutional decoder reconstructs observations from
the concatenation of the deterministic and stochastic components.
Since we operate in a reward-free setting, no reward predictor is
attached. The actor and critic are MLPs conditioned on the
concatenated latent: the actor outputs a truncated Normal
distribution for continuous action spaces and a categorical
distribution for discrete action spaces; the critic outputs a scalar
state value. The ensemble consists of $M = 5$ independently
initialised MLPs operating in the latent space of the world model.
All members receive the same training data and differ only in their
random initialisation; we use no bootstrap masks, diversity
penalties, or architectural variation between members. The ensemble
is used by CIG and Plan2Explore; remaining baselines do not require
it, but instead have different architectures that include MLPs of the same depth and width as the ensemble MLPs.

\paragraph{Training.}
All components are trained with Adam~\citep{2015-kingma}. The world
model is trained on replay-buffer sequences by jointly optimising
the reconstruction, KL-regularisation, and discount losses of
DreamerV2. The ensemble is trained alongside the world model via
per-member mean-squared error on replay-buffer transitions encoded
to latent states; the reward path detaches the ensemble outputs, so
the intrinsic reward does not backpropagate into the ensemble
parameters. The actor and critic are trained on imagined rollouts of
the world model: starting from replay-sampled latents, the actor is
rolled out through the RSSM transition model for a fixed horizon,
and the intrinsic reward is computed at each imagined step. The
critic is regressed onto $\lambda$-returns; the actor is updated
through the world-model dynamics via reparameterisation gradients
(continuous action spaces) or REINFORCE (discrete action spaces).
The behaviour policy is the actor itself, with no additional noise other than that induced by entropy bonus that is part of DreamerV2, thus exploration is mainly driven by the intrinsic reward. 

\paragraph{Baselines.}
All baselines use the same DreamerV2 world model, actor--critic
architecture, and training procedure described above. Baselines that
require ensemble disagreement (Plan2Explore) use the same ensemble.
Each baseline reward is computed on imagined rollouts in the same
latent space, isolating the reward as the sole variable of
comparison. The specific baseline rewards are listed in
\S\ref{sec:baselines}.

\subsection{Hyperparameters}
\label{sec:hparams}

CIG is built on top of a DreamerV2 world model~\citep{hafnerMasteringAtariDiscrete2021}
augmented with a Plan2Explore~\citep{sekarPlanningExploreSelfSupervised2020} ensemble. We follow
the default DreamerV2 backbone configuration of \citet{hafnerMasteringAtariDiscrete2021}
unchanged and refer the reader there for full detail; the most
important values are restated in Table~\ref{tab:hparams} alongside the
hyperparameters introduced by CIG and the Plan2Explore ensemble. 

\begin{table}[h]
\centering
\caption{Hyperparameters for the CIG experiments. Backbone settings
not listed here follow the DreamerV2 defaults of \citet{hafnerMasteringAtariDiscrete2021}.}
\label{tab:hparams}
\small
\begin{tabular}{lll}
\toprule
\textbf{Hyperparameter} & \textbf{Symbol} & \textbf{Value} \\
\midrule
\multicolumn{3}{l}{\textit{CIG reward (Eq.~\ref{eq:approx_objective}, Eq.~\ref{eq:cig_reward})}} \\
\midrule
Ridge scale multiplier              & ---                & $1.0$ \\
$\hat{\sigma}^2$ EMA momentum       & $\beta_{\sigma^2}$ & $0.99$ \\
\midrule
\multicolumn{3}{l}{\textit{Plan2Explore ensemble (carries A1)}} \\
\midrule
Ensemble size                       & $M$ & $5$ \\
Target dimensionality               & $d$ & $200$ (proprioceptive) / $1536$ (pixels) \\
Head architecture                   & --- & shared MLP (below) \\
\midrule
\multicolumn{3}{l}{\textit{Shared MLP architecture and optimization}} \\
\midrule
Hidden layers                       & --- & $4$ \\
Hidden units per layer              & --- & $400$ \\
Activation                          & --- & ELU \\
Initialization                      & --- & Glorot \\
Optimizer                           & --- & Adam, $\epsilon=10^{-5}$, wd $10^{-6}$, grad-clip $100$ \\
\midrule
\multicolumn{3}{l}{\textit{DreamerV2 world model}} \\
\midrule
Stochastic latent (categorical)     & ---       & $32 \times 32$ \\
Deterministic GRU state             & ---       & $600$ \\
KL balancing                        & $\alpha$  & $0.8$ \\
Free nats                           & ---       & $0.0$ \\
CNN channel multiplier              & ---       & $48$ \\
Model learning rate                 & ---       & $2 \times 10^{-4}$ \\
\midrule
\multicolumn{3}{l}{\textit{Actor--critic over imagined rollouts}} \\
\midrule
Imagination horizon                 & $T$       & $15$ \\
Discount                            & $\gamma$  & $0.99$ \\
$\lambda$-return                    & $\lambda$ & $0.95$ \\
Actor learning rate                 & ---       & $8 \times 10^{-5}$ \\
Critic learning rate                & ---       & $8 \times 10^{-5}$ \\
Slow-critic update                  & ---       & hard, every $100$ gradient steps \\
\midrule
\multicolumn{3}{l}{\textit{Replay buffer and training schedule}} \\
\midrule
Replay capacity                     & --- & $10^6$ transitions \\
Sequence length                     & --- & $50$ \\
Batch size (trajectories)           & --- & $50$ \\
Random prefill                      & --- & $5{,}000$ env steps \\
Train every                         & --- & $100$ env steps \\
Gradient updates per train call     & --- & $10$ \\
\midrule
\multicolumn{3}{l}{\textit{Reward normalization}} \\
\midrule
Per-step reward normalization       & --- & z-score \\
Running statistics EMA momentum     & --- & $0.99$ \\
Return-level normalization          & --- & Welford running return std \\
\bottomrule
\end{tabular}
\end{table}

\subsection{Environments}
\label{sec:environments}

\begin{figure*}[t]
  \centering
  \setlength{\tabcolsep}{1pt}
  \resizebox{\textwidth}{!}{%
  \begin{tabular}{cccccc}
    \makebox[0pt]{\tiny KeyCorridorS4R3} & \makebox[0pt]{\tiny MultiRoom-N7-S8} & \makebox[0pt]{\tiny ObstructedMaze-2Dlhb} & \makebox[0pt]{\tiny AntMaze} & \makebox[0pt]{\tiny Puzzle-3$\times$3} & \makebox[0pt]{\tiny Scene Explore} \\
    \includegraphics[width=0.158\textwidth]{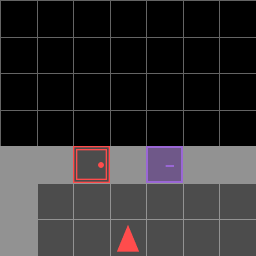} &
    \includegraphics[width=0.158\textwidth]{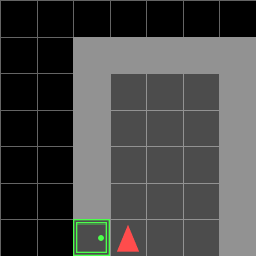} &
    \includegraphics[width=0.158\textwidth]{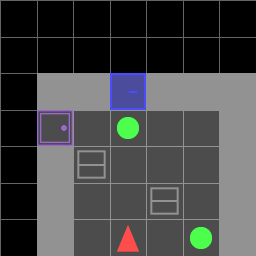} &
    \includegraphics[width=0.158\textwidth]{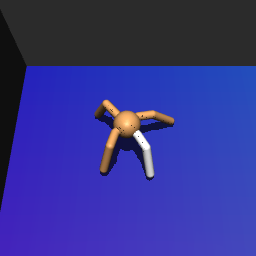} &
    \includegraphics[width=0.158\textwidth]{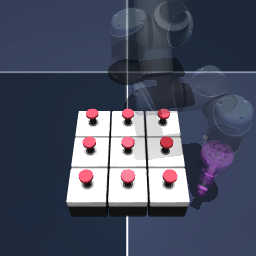} &
    \includegraphics[width=0.158\textwidth]{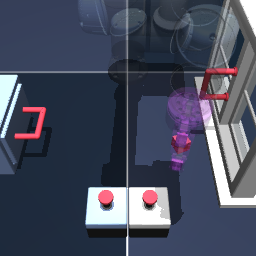} \\[4pt]
    \makebox[0pt]{\tiny KeyCorridorS4R3-NT} & \makebox[0pt]{\tiny MultiRoom-N7-S8-NT} & \makebox[0pt]{\tiny ObstructedMaze-2Dlhb-NT} & \makebox[0pt]{\tiny AntMaze-NT} & \makebox[0pt]{\tiny Puzzle-3$\times$3-NT} & \makebox[0pt]{\tiny Cubes Triple} \\
    \includegraphics[width=0.158\textwidth]{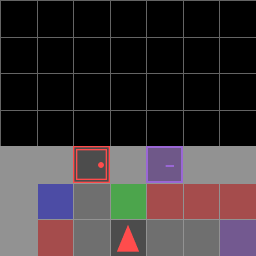} &
    \includegraphics[width=0.158\textwidth]{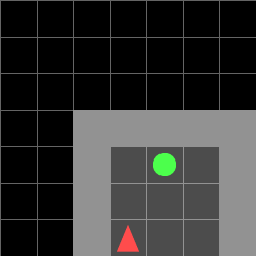} &
    \includegraphics[width=0.158\textwidth]{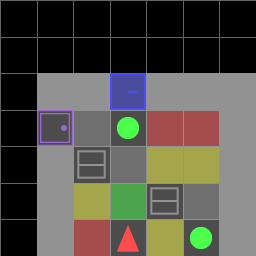} &
    \includegraphics[width=0.158\textwidth]{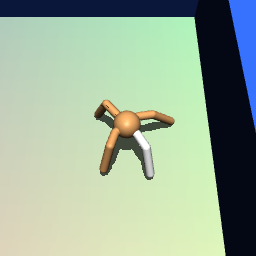} &
    \includegraphics[width=0.158\textwidth]{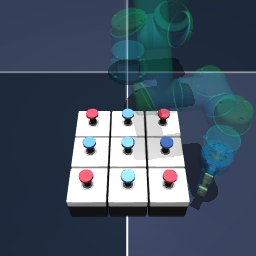} &
    \includegraphics[width=0.158\textwidth]{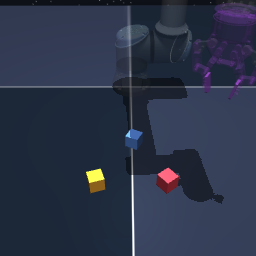} \\
  \end{tabular}%
  }
  \caption{RGB observations from the evaluation environments (the agent sees 64$\times$64 downsampled versions).}
  \label{fig:env-observations}
\end{figure*}

All environments operate in a fully reward-free setting: the agent never
observes extrinsic reward during pretraining.  Instead, we track a
domain-specific coverage proxy for each environment that measures how much
of the reachable state space the agent has visited.  Every environment
provides $64{\times}64$ RGB (see Figure \ref{fig:env-observations}).
Table~\ref{tab:env-summary} gives an overview; the paragraphs that follow
supply the details needed for reproduction.

\begin{table}[h]
\centering
\caption{Summary of evaluation environments.  ``Distractor'' entries
refer to the visual-distractor variants described in
\S\ref{sec:distractor-variants}.}
\label{tab:env-summary}
\small
\begin{tabular}{llcll}
\toprule
\textbf{Environment} & \textbf{Action space} & \textbf{Ep.\ length} &
  \textbf{Capability tested} & \textbf{Metric} \\
\midrule
\multicolumn{5}{l}{\textit{MiniGrid (discrete, 7 actions)}} \\
KeyCorridorS4R3        & discrete & default & multi-step credit assignment    & cum.\ successes \\
MultiRoom-N7-S8        & discrete & default & long-horizon directed expl.     & cum.\ successes \\
ObstructedMaze-2Dlhb   & discrete & default & compositional object interact.  & cum.\ successes \\
+ distractor variants  & \multicolumn{4}{l}{\quad(same metrics; see \S\ref{sec:distractor-variants})} \\
\midrule
\multicolumn{5}{l}{\textit{OGBench (continuous)}} \\
Scene Explore          & 7-DoF arm   & 1000 & heterogeneous affordances     & sub-task \% \\
AntMaze                & 8-DoF ant   & 1000 & spatial expl.\ + locomotion   & cell \% \\
Puzzle-3$\times$3      & 7-DoF arm   & 1000 & combinatorial coverage        & config.\ \% \\
Cubes Triple           & 7-DoF arm   & 1000 & contact-rich rearrangement    & config.\ \% \\
+ distractor variants  & \multicolumn{4}{l}{\quad(same metrics; see \S\ref{sec:distractor-variants})} \\
\bottomrule
\end{tabular}
\end{table}

\subsubsection{MiniGrid}
\label{sec:minigrid-envs}

All MiniGrid environments use the standard seven discrete actions (turn
left, turn right, move forward, pick up, drop, toggle, done) and render a
$64{\times}64$ RGB egocentric partial view.  Environments are
procedurally generated, so the agent encounters a new layout variant on
every reset.  Episodes are truncated after the per-task default step
budget defined by \citet{chevalier-boisvert2023minigrid}.

The exploration metric shared by all MiniGrid tasks is \emph{cumulative
successes}: at every environment step we check whether the agent has
reached a rewarding state (the goal tile, target object, etc.) and, if so,
record a single success for that episode.  Visiting the rewarding state
more than once within the same episode does not increment the counter
further.  We adopt this metric because the agent never observes the
extrinsic signal, so cumulative successes measure how often reward-free
exploration incidentally solves the task, which is the quantity that
ultimately matters for any downstream exploitation phase.

\paragraph{KeyCorridorS4R3.}
KeyCorridorS4R3 requires picking up a target object
behind a locked door.  The key is placed behind a separate, closed but
unlocked door, so the agent must enter a side room, collect the key,
return to the corridor, and unlock the target door.  The layout consists
of three rows of $4{\times}4$ rooms connected by a central corridor.
The task tests multi-step credit assignment under sparse structure: the
agent must commit to a long prerequisite detour before any successful
trajectory becomes possible, so it reveals whether exploration can
discover and follow a chained prerequisite structure rather than greedily
pursuing local novelty.

\paragraph{MultiRoom-N7-S8.}
MultiRoom-N7-S8 arranges seven rooms of randomised size
(up to $8{\times}8$) in sequence, each connected by a closed door the
agent must toggle open.  The agent starts in the first room and must reach
a green goal tile in the last.  The task tests long-horizon directed
exploration: the agent must keep pushing into unseen rooms rather than
collapsing onto the well-explored neighbourhood of its starting position.

\paragraph{ObstructedMaze-2Dlhb.}
ObstructedMaze-2Dlhb is the second tier of the Obstructed
Maze suite (two rooms across) and the hardest of our three MiniGrid base
tasks.  The suffix indicates that doors are simultaneously locked (require
a coloured key), hidden (the key is concealed inside a box), and blocked
(a ball obstructs the door and must be moved aside).  Reaching the target
therefore requires chaining several object interactions over a long
horizon.  The task tests compositional object interaction: each subgoal
decomposes into distinct manipulation primitives, so the environment
reveals whether exploration scales to settings that require correctly
composing multiple interactions.

\subsubsection{OGBench}
\label{sec:ogbench-envs}

All OGBench environments provide $64{\times}64$ RGB observations and
continuous actions.  Episodes run for a fixed 1000 steps and are truncated
at the end of the budget; there is no early termination.  Initial
configurations are pinned with a fixed reset seed across episodes.  Each
environment defines a coverage metric following the same template: a
continuous state quantity is discretised into cells or configuration
identifiers, each identifier is added to a visited set upon first
encounter, and the reported metric is the fraction of reachable
identifiers visited at least once (expressed as a percentage).  The
visited set is monotone non-decreasing within each evaluation window.
Per-environment details of the discretisation and state quantity are given
below.

\paragraph{Scene Explore.}
The Scene environment is a tabletop manipulation task in
which a 7-DoF arm interacts with two buttons, a drawer, a window, and a
single block.  Both buttons start locked, the drawer and window start
fully closed, and the block is placed at a fixed location.  The task tests
breadth of interaction across heterogeneous affordances: buttons toggle,
the drawer and window slide, and the block can be lifted and translated,
so it reveals whether the policy discovers all available affordances
rather than fixating on the easiest one.
We enumerate a fixed set of sub-tasks defined by the scene's manipulable
elements (each button unlocked, drawer opened, window opened, block
displaced) and mark a sub-task as solved once its ground-truth state
predicate fires for the first time.  The metric is the fraction of
sub-tasks solved.

\paragraph{AntMaze.}
The medium variant of the OGBench antmaze: an 8-DoF
quadruped navigates a fixed maze layout from a top-down view.  The agent
is always reset at the same cell.  The task tests long-horizon spatial
exploration coupled with high-dimensional continuous control from pixels:
the agent must acquire competent locomotion of an 8-DoF body and use that
competence to reach spatially novel regions of the maze.
The maze is partitioned into a grid of cells defined by the underlying
layout.  At every step the ant's torso $(x,y)$ position is snapped to the
enclosing cell and added to the visited set.  The metric is the fraction
of reachable cells visited.

\paragraph{Puzzle-3$\times$3.}
A $3{\times}3$ lights-out style button puzzle.  A 7-DoF
arm presses buttons arranged in a grid; pressing one toggles it and its
orthogonal neighbours, yielding $2^{9}=512$ reachable configurations.
All nine buttons start in the off state.  The task tests combinatorial
state-space coverage: the 512 configurations are accessible only via long
action chains under adjacent-toggle dynamics, so it reveals whether the
agent systematically traverses a discrete combinatorial space rather than
visiting only configurations close to the initial state.
The ground-truth on/off state of all nine buttons is read at every step
and treated as a 9-bit configuration identifier.  The metric is the
fraction of the 512 reachable configurations visited.

\paragraph{Cubes Triple.}
The three-cube tabletop task: a 7-DoF arm interacts with
three cubes initialised in a fixed row.  The task tests configurational
exploration in a continuous, contact-rich domain: the agent must grasp,
lift, and rearrange multiple objects, so it reveals whether the method
discovers diverse multi-object configurations rather than collapsing onto
single-object behaviour.
The workspace is partitioned into a 3D grid (using the cube spawn ranges
in $x$, $y$, $z$ at the spacing used to generate goal positions).  Each
cube's position is snapped to its enclosing cell, and the three cell
indices are concatenated into a configuration identifier.  The metric is
the fraction of unique discretised configurations visited.

\subsubsection{Visual-Distractor Variants}
\label{sec:distractor-variants}

For each domain we construct distractor variants that instantiate the
\emph{noisy-TV} problem with action-dependent noise: a region of the
observation is overwritten with content that is sampled from a
stochastic process the agent cannot predict, but whose updates the
agent itself triggers through a designated action.  The agent therefore
has full control over \emph{when} the distractor refreshes, and no
control over \emph{what} appears, which is exactly the setting that
traps prediction error driven exploration, since each refresh produces
genuinely novel pixels that no learned model can anticipate.  These
variants test whether the method can separate genuine state-space
novelty from this self-induced but irreducible pixel entropy.  The
underlying task, action space, episode budget, and exploration metric
of each base environment are left unchanged; only the visual
observation is modified.  Table~\ref{tab:distractor-variants}
summarises the variants; details follow.

\begin{table}[h]
\centering
\caption{Noisy-TV distractor variants with action-dependent noise.
Each row modifies only the observation of its base environment; all
other aspects (task, actions, episode length, metric) are identical.
The agent triggers each refresh through the listed action, but the
sampled content is unpredictable, so an agent that maximises naive
prediction error can in principle drive unbounded reward through its
own behaviour without ever exploring the task.}
\label{tab:distractor-variants}
\small
\begin{tabular}{lll}
\toprule
\textbf{Base environment} & \textbf{Distractor target} &
  \textbf{Trigger} \\
\midrule
KeyCorridorS4R3      & Floor texture       & \texttt{drop} action \\
MultiRoom-N7-S8      & Colour-changing orb & \texttt{drop} action \\
ObstructedMaze-2Dlhb & Floor texture       & \texttt{drop} action \\
AntMaze              & Wall colours        & Action-dim threshold \\
Puzzle-3$\times$3    & Robot appearance    & Centre-button press \\
\bottomrule
\end{tabular}
\end{table}

\paragraph{MiniGrid distractor variants.}
In KeyCorridorS4R3 and ObstructedMaze-2Dlhb, the floor within the
agent's field of view is overwritten with random colours; the colours
are independently re-sampled whenever the agent issues the
\texttt{drop} action.  In MultiRoom-N7-S8 the floor cannot be modified
because the green goal tile must remain visually identifiable;
instead, we add an orb to the observation whose colour is re-sampled
on the same \texttt{drop} action.  In all three cases the trigger is
task-irrelevant: \texttt{drop} has no effect on the underlying
MiniGrid task in our configurations, so the agent can summon a fresh,
unpredictable visual sample at any time without progressing toward the
goal.  These variants test whether the method still discovers the
prerequisite structures of each base task (key fetching, room
traversal, compositional object interaction) when an action-gated
noisy TV is freely available as an alternative.

\paragraph{AntMaze with action-gated distractor.}
The maze wall colours are re-sampled whenever a designated action
dimension exceeds a threshold; the floor texture, ant body colours,
and lighting are left untouched.  The distractor is therefore
controllable: the agent can summon arbitrarily many fresh wall
colourings without changing its $(x,y)$ position in the maze, but the
sampled colours themselves are unpredictable.  This variant tests
whether the method separates spatial novelty (reaching a new maze
region) from self-induced surface-level pixel change.

\paragraph{Puzzle-3$\times$3 with action-gated robot distractor.}
The robot's rendered appearance (the UR5e arm and Robotiq gripper
materials) is re-sampled whenever the agent presses the centre button;
the puzzle board and surrounding scene are left clean.  The trigger is
itself a puzzle action, but the colour change carries no task-relevant
information: the on/off button configuration that defines the metric
is unaffected, so an agent that maximises naive prediction error can
exploit the distractor by repeatedly toggling the centre button
without exploring the rest of the combinatorial configuration space.
The variant tests whether the method resists such self-induced novelty
and continues to explore the full button-configuration space.

\subsection{Aleatoric Variance Estimation}
\label{app:sigma_estimation}

\paragraph{Ensemble training dynamics}
Assumption A2 posits $s_{t+1} \mid k, s_t, a_t \sim \mathcal{N}(\mu_k(s_t, a_t),\, \sigma^2 I_d)$ with a shared, isotropic, non-learned covariance. We adopt this generative model directly from Plan2Explore~\citep{sekarPlanningExploreSelfSupervised2020} for two reasons. First, it is the probabilistic model implicit in the P2E training protocol, and holding it fixed preserves the P2E ensemble training dynamics exactly; \gls{cig} then differs from P2E only at reward-computation time and in the resulting distribution of training data, so any downstream comparison attributes performance differences to the reward itself. Second, Plan2Explore establishes empirically that this noise model is sufficient for effective ensemble-based exploration in the Dreamer latent space, so we inherit its validation rather than re-litigate it.

The scalar $\sigma^2$ is not optimized during ensemble training and therefore must be supplied separately at reward-computation time, where it enters the ridge $\sigma^2 d \cdot I_T$ in Eq.~\eqref{eq:approx_objective}. We estimate it post-hoc from ensemble-mean residuals.

\paragraph{Estimating $\sigma^2$}
The scalar $\sigma^2$ does not appear in the ensemble training loss and must be estimated post-hoc for use in the ridge term of Eq.~\eqref{eq:approx_objective}. We use residuals against the \emph{ensemble mean} $\bar{\mu}(s_n, a_n) = \frac{1}{M}\sum_k \mu_k(s_n, a_n)$:
\begin{equation}
    \hat{\sigma}^2 = \frac{1}{Nd} \sum_{n=1}^{N} \bigl\|s_{n+1} - \bar{\mu}(s_n, a_n)\bigr\|^2.
    \label{eq:sigma_estimate_app}
\end{equation}
Residuals against individual members would conflate aleatoric noise with epistemic disagreement; the ensemble mean cancels the symmetric component of that disagreement and leaves a residual whose squared norm is dominated by the aleatoric contribution. For finite $M$, $\bar{\mu}$ retains some epistemic error, so $\hat{\sigma}^2$ upper-bounds the true aleatoric floor. This direction of bias is the conservative one: overestimating the aleatoric floor attenuates the conditional correction rather than amplifying it. In practice, $\hat{\sigma}^2$ is maintained as a running average over ensemble-mean residuals on training transitions.

\subsection{Compute \& Wall-clock Time}
\label{app:compute}
All experiments were conducted on machines equipped with a single NVIDIA RTX 3090 GPU (24\,GB VRAM), 32\,GB RAM, and 8 CPU cores. Each individual run completes in under 24 hours, with a typical wall-clock time of approximately 20 hours depending on the level of metric logging. The computational overhead of the proposed intrinsic reward mechanisms is negligible relative to the DreamerV2 base agent, which accounts for the majority of resource consumption. The difference between CIG and P2E, the only other ensemble-based baseline in our experiments, is less than 1\% in FLOPS, since the ensemble forward passes, which are shared by both, take the majority of compute. However, in practice the difference is around 10\% in wall-clock time depending on the GPU architecture and optimizations made by the PyTorch compiler. We expect this gap to narrow with straightforward engineering efforts. 
 
Our main evaluation comprises 12 tasks across 7 method configurations, each repeated over 5 seeds, yielding a total of 420 runs. Reproducing the full set of main results therefore requires approximately 8{,}400 single-GPU hours.

\subsection{Code}
\label{app:code}
The code used to produce the experiments is original code by the authors and based on PyTorch (https://pytorch.org/).

The environments are derived from the following projects:
\begin{enumerate}
    \item OGBench (https://github.com/seohongpark/ogbench) — MIT License
    \item MiniGrid (https://github.com/Farama-Foundation/Minigrid) — Apache License 2.0
\end{enumerate}
\section{Theoretical Analysis}
\label{app:theory}

\subsection{Tightness of the Gaussian Entropy Bound (A3)}
\label{app:entropy_bound}

Approximation~A3 replaces the mixture entropy $H(s_{1:T})$ with the
entropy of the moment-matched Gaussian
$q = \mathcal{N}(\bar{\boldsymbol{\mu}},\, \Sigma)$, where
$\Sigma = \sigma^2 I_{Td} + C$. Because $q$ shares the mean and
covariance of the mixture
$p = \tfrac{1}{M}\sum_{k} \mathcal{N}(\boldsymbol{\mu}_k,\,
\sigma^2 I_{Td})$, the cross-entropy $-\!\int p\log q$ evaluates to
$H(q)$, and the gap introduced by A3 is
\begin{equation}\label{eq:gap_kl}
  H(q) - H(p) \;=\; D_{\mathrm{KL}}(p \,\|\, q) \;\geq\; 0.
\end{equation}
The surrogate therefore overestimates entropy for every trajectory.
We now argue that this overestimation is benign in both regimes
encountered during training.

\paragraph{Well-explored regions.}
As the agent collects data in a region of the state space, the
ensemble members converge toward agreement: the means
$\boldsymbol{\mu}_k$ approach a common function and the epistemic
covariance $C \to 0$. In this limit the mixture $p$ collapses to a
single Gaussian identical to~$q$, so
$D_{\mathrm{KL}}(p \,\|\, q) \to 0$ and the bound becomes tight.
The surrogate is therefore most faithful in precisely the regions
where the agent must discriminate between trajectories of similar
information content.

\paragraph{Unexplored regions.}
When the ensemble members disagree strongly, the mixture is
multi-modal and the Gaussian approximation can be loose. Two
observations limit the practical impact of this looseness. First,
the true information gain is itself large in unexplored regions, so
even a coarse ranking of trajectories directs the agent toward
informative transitions. Second, the policy has little basis for
fine-grained trajectory selection in parts of the state space it has
rarely visited; the surrogate need only distinguish broadly
informative directions from uninformative ones, a task for which the
overestimate suffices. The failure mode of A3 is therefore
over-exploration rather than premature convergence: the agent may
spend additional time in high-uncertainty regions, but it will not
be steered away from them.

\paragraph{Empirical precedent.}
Plan2Explore~\citep{sekarPlanningExploreSelfSupervised2020} relies on the same moment-matching
approximation applied independently to each step: its
ensemble-disagreement reward is the entropy of the Gaussian
matched to the per-step predictive $p(s_t)$, discarding the
mixture structure in exactly the same way as A3. The empirical
success of this per-step surrogate across a range of
continuous-control tasks provides evidence that the Gaussian
approximation is adequate for exploration in practice. Our usage
extends the approximation from the $d$-dimensional per-step
marginal to the $Td$-dimensional joint trajectory distribution,
where cross-step correlations can introduce additional
non-Gaussianity absent from the marginals. The per-step evidence is
therefore encouraging but does not directly certify the
trajectory-level bound; the experiments in
\S\ref{sec:experiments} provide this validation.

\paragraph{Interaction with the trace reduction.}
The bound established above survives the trace reduction (D1) when the epistemic cross-covariance blocks $C_{jt}$ are well summarised by their scalar traces, i.e.\ when the off-diagonal entries of each $d \times d$ block carry negligible mass relative to the diagonal. In this regime each block satisfies $C_{jt} = c_{jt} I_d$, so $\Sigma = B \otimes I_d$ with $B = \tfrac{1}{d}(K + \sigma^2 d\, I_T)$ and
\begin{equation}
    \log\det(C + \sigma^2 I_{Td}) = d\,\log\det(K + \sigma^2 d\, I_T) - dT\log d.
\end{equation}
The factor $d > 0$ and the additive constant $-dT\log d$ are both policy-independent, so $\tilde{J}$ preserves the $\operatorname*{arg\,max}$ over policies and inherits the upper-bound guarantee of A3. As off-diagonal correlations within the blocks grow, D1 discards directional structure and the bounding relationship breaks down (Appendix~\ref{app:capacity_saturation}). The surrogate then derives its justification from the monotonicity and concavity of Proposition~\ref{prop:surrogate_properties} and from the empirical validation in \S\ref{sec:experiments}.
\subsection{Capacity Saturation and the Trace Reduction}
\label{app:capacity_saturation}

The Gaussian bound of Appendix~\ref{app:entropy_bound} reduces the
trajectory-level objective to $\log\det(\Sigma)$ with
$\Sigma = \sigma^2 I_{Td} + C$. This section establishes that the
epistemic covariance~$C$ has insufficient rank to support multi-step
exploration, derives the surrogate obtained without the trace
reduction (used as an ablation baseline in \S\ref{sec:ablations}),
and proves that D1 restores the effective capacity to
$\min(T,\,(M{-}1)d)$.

\begin{theorem}[Capacity saturation and restoration]
\label{thm:saturation}
Let\/
$\boldsymbol{\delta}_k^{(t)} = \mu_k(s_{t-1}, a_{t-1}) -
\bar{\mu}(s_{t-1}, a_{t-1}) \in \mathbb{R}^d$ for\/
$k = 1,\dots,M$ and\/ $t = 1,\dots,T$, and define the epistemic
covariance\/ $C \in \mathbb{R}^{Td \times Td}$ and trace-reduced
kernel\/ $K \in \mathbb{R}^{T \times T}$ as in
Eqs.~\eqref{eq:epistemic_block}--\eqref{eq:gram_matrix}.
\begin{enumerate}[label=(\roman*)]
  \item $\operatorname{rank}(C) \leq M - 1$.
  \item $\operatorname{rank}(K) \leq \min\!\bigl(T,\,(M{-}1)d\bigr)$.
\end{enumerate}
\end{theorem}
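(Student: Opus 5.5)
We write both matrices as Gram-type products of a single stacked deviation matrix and read off ranks from the factorisations.

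\textbf{Part (i).} For each member $k$, stack its deviations into
\[
\mathbf{v}_k = [\boldsymbol{\delta}_k^{(1)};\dots;\boldsymbol{\delta}_k^{(T)}] \in \mathbb{R}^{Td}.
\]
By Eq.~\eqref{eq:epistemic_block}, the $(j,t)$ block of $C$ is $\frac1M\sum_k \boldsymbol{\delta}_k^{(j)}(\boldsymbol{\delta}_k^{(t)})^\top$. This is exactly the $(j,t)$ block of $\frac1M\sum_k \mathbf{v}_k\mathbf{v}_k^\top$, so
\[
C = \tfrac1M V V^\top, \qquad V = [\mathbf{v}_1,\dots,\mathbf{v}_M] \in \mathbb{R}^{Td\times M}.
\]
Hence $\operatorname{rank}(C)=\operatorname{rank}(V)$.

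The key observation is centring. At every step $t$ we have $\sum_k \boldsymbol{\delta}_k^{(t)} = \sum_k \mu_k - M\bar\mu = 0$, so $\sum_k \mathbf{v}_k = 0$, i.e.\ $V\mathbf{1}_M=0$. The kernel of $V$ therefore contains $\mathbf{1}_M$, and rank–nullity gives $\operatorname{rank}(V)\le M-1$. This proves (i).

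\textbf{Part (ii).} The bound $\operatorname{rank}(K)\le T$ is immediate because $K$ is $T\times T$. For the other bound, let $w_t\in\mathbb{R}^{Md}$ stack $\boldsymbol{\delta}_1^{(t)},\dots,\boldsymbol{\delta}_M^{(t)}$ for step $t$. By Eq.~\eqref{eq:gram_matrix}, $K_{jt}=\frac1M w_j^\top w_t$, so
\[
K = \tfrac1M W^\top W, \qquad W=[w_1,\dots,w_T]\in\mathbb{R}^{Md\times T}.
\]
Hence $\operatorname{rank}(K)=\operatorname{rank}(W)$, which is at most the dimension of the space containing the columns $w_t$.

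Again use centring: each $w_t$ lies in the subspace
\[
\mathcal{Z}=\Bigl\{(x_1,\dots,x_M)\in(\mathbb{R}^d)^M : \textstyle\sum_k x_k=0\Bigr\}.
\]
$\mathcal{Z}$ is the kernel of the surjective summation map $(\mathbb{R}^d)^M\to\mathbb{R}^d$, so $\dim\mathcal{Z} = Md-d=(M-1)d$. Thus $\operatorname{rank}(W)\le (M-1)d$, and combined with $\operatorname{rank}(K)\le T$ this gives (ii).

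\textbf{Remarks.} No step is a real obstacle. The only points needing care are the bookkeeping that identifies the block structure of $VV^\top$ with Eq.~\eqref{eq:epistemic_block}, and using the centring constraint in the right place: it gives a single linear dependency among the columns of $V$ in (i), and a $d$-dimensional constraint on each column of $W$ in (ii).

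The statement is an upper bound only. If one wants to claim "effective capacity" equality generically, a short genericity remark would be needed: for generic deviations, the $w_t$ span $\min(T,(M-1)d)$ dimensions, because $\mathcal{Z}$ has that dimension and generic vectors in $\mathcal{Z}$ are linearly independent up to $\dim\mathcal{Z}$. We would mention this as a remark rather than prove it as part of the theorem.
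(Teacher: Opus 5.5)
Your proposal is correct and follows essentially the same route as the paper: both parts write $C$ and $K$ as Gram products of the stacked deviations ($C = \frac{1}{M}VV^\top$ with $V\mathbf{1}_M = 0$, and your $W$ is $\sqrt{M}F^\top$ for the paper's $F$), then use the centring constraint to lower the rank. The only cosmetic difference is in (ii). You bound the rank by placing the columns $w_t$ in the $(M-1)d$-dimensional centred subspace, whereas the paper notes that the last $d$ columns of $F$ lie in the span of the preceding $(M-1)d$ columns. These are the same fact, read from the rows of $F$ instead of its columns.
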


\paragraph{Why the full covariance saturates.}
Each $d \times d$ block $C_{jt}$ estimates a cross-covariance with
$d^2$ free parameters from only $M$ outer products. For $M = 5$ and
$d = 256$, this is $65{,}536$ parameters from $5$ samples: the
resulting estimate has rank at most $M - 1$ regardless of the true
rank, and the directional structure it reports is dominated by
sampling noise.

\begin{proof}[Proof of\/ \textup{(i)}]
Stack the per-step deviations into
$\mathbf{d}_k = [(\boldsymbol{\delta}_k^{(1)})^\top,\, \dots,\,
(\boldsymbol{\delta}_k^{(T)})^\top]^\top \in \mathbb{R}^{Td}$. Then
$C = \frac{1}{M}\sum_{k=1}^{M}
\mathbf{d}_k\,\mathbf{d}_k^\top$, a sum of $M$ rank-one matrices
with column spaces contained in
$\operatorname{span}\{\mathbf{d}_1,\dots,\mathbf{d}_M\}$, so
$\operatorname{rank}(C) \leq M$. The centering identity
$\sum_{k=1}^{M}\mathbf{d}_k = \mathbf{0}$ (which holds by
definition of~$\bar{\mu}$) forces the last vector into the span of
the first $M - 1$, giving
$\operatorname{rank}(C) \leq M - 1$.
\end{proof}

The rank bound has a direct information-theoretic reading.
Assumption~A1 reduces exploration to classifying which of $M$
ensemble members generated the trajectory, a problem with at most
$\log M$ bits of capacity. In high-dimensional latent spaces
($d \gg M$), a single observation is typically sufficient to resolve
this classification, leaving no reward signal for subsequent steps
in the rollout.

\paragraph{The surrogate without D1.}
Define
$D = \frac{1}{\sqrt{M}} [\mathbf{d}_1 \mid \cdots \mid
\mathbf{d}_M] \in \mathbb{R}^{Td \times M}$, so that
$C = DD^\top$. The Sylvester determinant identity gives
\begin{equation}\label{eq:sylvester}
  \log\det\!\bigl(\sigma^2 I_{Td} + C\bigr)
  = (Td - M)\log\sigma^2
  + \log\det\!\bigl(\sigma^2 I_M + D^\top\! D\bigr).
\end{equation}
The first term is independent of the policy. The second is the
log-determinant of the $M \times M$ Gram matrix
$G = D^\top\!D$, which has a zero eigenvalue due to centering
($G\,\mathbf{1}_M = \mathbf{0}$) and therefore at most $M - 1$
policy-dependent degrees of freedom. This is the non-trace-reduced
variant of the surrogate; we compare it against CIG in the ablations
of \S\ref{sec:ablations}.

\paragraph{How the trace reduction restores capacity.}
The trace $K_{jt} = \operatorname{tr}(C_{jt})$ replaces each
unreliable $d \times d$ directional estimate with a scalar summary,
the total variance, that can be estimated reliably from $M$ samples.
This rearranges the data: instead of $M$ vectors in
$\mathbb{R}^{Td}$ (rank $\leq M - 1$), the signal is carried by
$T$ rows of a $T \times Md$ factor matrix (rank
$\leq \min(T,\,(M{-}1)d)$), and the binding constraint shifts from
the number of ensemble members to the rollout horizon.

\begin{proof}[Proof of\/ \textup{(ii)}]
For each member~$k$, define the row matrix
$\Delta_k \in \mathbb{R}^{T \times d}$ whose $t$-th row is
$(\boldsymbol{\delta}_k^{(t)})^\top$. Then
$K = \frac{1}{M}\sum_{k=1}^{M}\Delta_k\,\Delta_k^\top$. Define
$F = \frac{1}{\sqrt{M}} [\Delta_1 \mid \cdots \mid \Delta_M]
\in \mathbb{R}^{T \times Md}$, so that $K = FF^\top$ and
$\operatorname{rank}(K) = \operatorname{rank}(F)
\leq \min(T,\, Md)$. The centering constraint
$\sum_k \Delta_k = 0$ forces the last $d$ columns of~$F$ into the
span of the preceding $(M{-}1)d$ columns, so
$\operatorname{rank}(F) \leq \min(T,\,(M{-}1)d)$.
\end{proof}

\paragraph{What the trace reduction preserves and sacrifices.}
The trace reduction sacrifices the bounding relationship to
$I(w;\, s_{1:T})$. To see why, note that the Gaussian bound of
Eq.~\eqref{eq:gaussian_bound} guarantees
$H(s_{1:T}) \leq \tfrac{1}{2}\log\det(2\pi e\,\Sigma)$; preserving
this guarantee after D1 would require
$\log\det(\sigma^2 I_{Td} + C) \leq
\log\det(K + \sigma^2 d \cdot I_T) + \text{const}$
for all policies. No such inequality holds in general: the
block-trace collapse can either increase or decrease the
log-determinant depending on the spectral structure of~$C$.
The surrogate $\tilde{J}$ is therefore neither an upper nor a lower
bound on the Gaussian entropy.

The following proposition establishes that $\tilde{J}$ nonetheless
retains the two structural properties needed for a well-behaved
exploration objective.

\begin{proposition}[Structural properties of the surrogate]
\label{prop:surrogate_properties}
Let\/ $\tilde{J}(K) = \log\det(K + \sigma^2 d \cdot I_T)$
with\/ $K \in \mathcal{S}_+^T$. The ridge ensures\/
$K + \sigma^2 d \cdot I_T \in \mathcal{S}_{++}^T$, so\/ $\tilde{J}$
is well-defined on all of\/ $\mathcal{S}_+^T$.
\begin{enumerate}[label=(\roman*)]
  \item \textbf{Monotonicity.} If\/ $K' \succeq K$ in the
    positive-semidefinite order, then\/
    $\tilde{J}(K') \geq \tilde{J}(K)$.
  \item \textbf{Concavity.} $\tilde{J}$ is concave on\/
    $\mathcal{S}_+^T$.
\end{enumerate}
\end{proposition}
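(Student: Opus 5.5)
We will reduce both parts to standard facts about $\log\det$ on the positive-definite cone. Write $\lambda \coloneqq \sigma^2 d > 0$ and $A(K) \coloneqq K + \lambda I_T$. Well-definedness comes first. For $K \in \mathcal{S}_+^T$ and any unit vector $x$, we have $x^\top A(K) x \ge \lambda > 0$. Hence $A(K) \in \mathcal{S}_{++}^T$, its determinant is positive, and $\tilde{J}$ is finite on all of $\mathcal{S}_+^T$. Moreover, $K \mapsto A(K)$ is an affine map that preserves the PSD order, in the sense that $K' \succeq K$ implies $A(K') - A(K) = K' - K \succeq 0$. Both claims therefore follow once we establish them for $\log\det$ on $\mathcal{S}_{++}^T$.

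\textbf{Monotonicity.} Let $A = A(K)$ and $A' = A(K')$ with $A' \succeq A \succ 0$. We factor
\[
A' = A^{1/2}\bigl(I_T + A^{-1/2}(K'-K)A^{-1/2}\bigr)A^{1/2}.
\]
The middle factor is $I_T + E$ with $E \succeq 0$, because congruence preserves positive semidefiniteness. Its eigenvalues are therefore all at least $1$. Taking determinants gives $\log\det A' = \log\det A + \sum_i \log(1+\lambda_i(E)) \ge \log\det A$, which is (i).

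\textbf{Concavity.} We use the classical restriction-to-lines argument. Fix $K_0 \in \mathcal{S}_+^T$ and a symmetric direction $H$ such that $K_0 + tH \in \mathcal{S}_+^T$ on an interval. Let $A_0 = A(K_0) \succ 0$ and $\tilde H = A_0^{-1/2} H A_0^{-1/2}$ with eigenvalues $\mu_i$. Then
\[
g(t) \coloneqq \tilde{J}(K_0 + tH) = \log\det A_0 + \sum_i \log(1 + t\mu_i),
\]
valid wherever $A_0 + tH \succ 0$, which holds on the whole interval by the first step. Each term $\log(1+t\mu_i)$ is concave in $t$, with second derivative $-\mu_i^2/(1+t\mu_i)^2 \le 0$. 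So $g$ is concave on every segment in the convex domain. Since $\mathcal{S}_+^T$ is convex, this gives concavity of $\tilde{J}$. Equivalently, we could cite concavity of $\log\det$ on $\mathcal{S}_{++}^T$ and note that composition with an affine map preserves concavity.

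The only mildly delicate point is domain bookkeeping. Concavity must be checked on segments that stay inside the domain where $A \succ 0$. That holds here because $\mathcal{S}_+^T$ is convex and the ridge keeps $A(K)$ positive definite throughout. Beyond this, the argument is routine, and we expect no real obstacle.
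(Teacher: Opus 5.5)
Your proof is correct and takes the paper's route: the ridge $\sigma^2 d\, I_T$ is an affine, order-preserving shift into $\mathcal{S}_{++}^T$, after which you use monotonicity of $\log\det$ in the PSD order and concavity of $\log\det$ composed with an affine map. The only difference is that you prove these two standard facts directly, via the congruence $A^{-1/2}(K'-K)A^{-1/2}$ and the restriction-to-lines computation, where the paper cites them; this makes your argument self-contained without changing it in substance.
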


\begin{proof}
For (i): $K' \succeq K$ implies
$K' + \sigma^2 d \cdot I_T \succeq K + \sigma^2 d \cdot I_T \succ 0$,
and $\log\det$ is monotone on
$\mathcal{S}_{++}^T$~\citep[Theorem~7.6.7]{0521386322}.
For (ii): $\log\det$ is concave on
$\mathcal{S}_{++}^T$, and
$K \mapsto K + \sigma^2 d \cdot I_T$ is affine; the composition of a
concave function with an affine map is concave.
\end{proof}

Monotonicity ensures that visiting a state with higher disagreement
cannot decrease the objective. Concavity implies diminishing returns:
the marginal value of the $t$-th step decreases as earlier steps
account for more of the epistemic signal, penalising redundant
rollouts. Together, the two properties replace the lost bounding
guarantee with direct control over the incentive landscape that the
policy optimises.

For the practical values $M = 5$ and $d \geq 200$, D1 raises the
capacity ceiling from~$4$ to a minimum of~$800$, which exceeds any rollout
horizon used in this paper ($T = 15$). The binding constraint on
the rank of~$K$ is therefore~$T$: every step in the rollout can
contribute an independent direction of signal to the log-determinant
surrogate. We isolate the effect of this capacity restoration
empirically in \S\ref{sec:ablations}.
\subsection{Limiting Cases of the CIG Reward}
\label{app:reward_limits}

The interpretation paragraph of \S\ref{sec:reward_derivation} claims
that the CIG reward $r_t$ interpolates between the lifelong
disagreement bonus and the aleatoric floor depending on the
relationship between step~$t$ and the rollout prefix. The following
proposition makes both limits precise.

\begin{proposition}[Limiting cases]
\label{prop:reward_limits}
Let\/ $r_t$ be defined as in Eq.~\eqref{eq:cig_reward}, with
$\mathbf{k}_{<t} \in \mathbb{R}^{t-1}$ collecting the entries
$K_{jt}$ for\/ $j < t$ and\/
$\tilde{K}_{<t} = K_{<t} + \sigma^2 d \cdot I_{t-1}$.
\begin{enumerate}[label=(\roman*)]
  \item \textbf{Orthogonal disagreement.} If\/
    $K_{jt} = 0$ for all\/ $j < t$, then
    $r_t = \log(K_{tt} + \sigma^2 d)$.
  \item \textbf{Redundant disagreement.} If there exists\/
    $\boldsymbol{\alpha} \in \mathbb{R}^{t-1}$ such that\/
    $\mathbf{k}_{<t} = K_{<t}\,\boldsymbol{\alpha}$ and\/
    $K_{tt} = \boldsymbol{\alpha}^\top K_{<t}\,\boldsymbol{\alpha}$
    (i.e., row~$t$ of\/ $K$ lies in the row span of\/ $K_{<t}$),
    then
    \begin{equation}\label{eq:reward_sandwich}
      \sigma^2 d
      \;\leq\;
      \exp(r_t)
      \;\leq\;
      \sigma^2 d\,\bigl(1 + \lVert\boldsymbol{\alpha}\rVert^2\bigr).
    \end{equation}
\end{enumerate}
\end{proposition}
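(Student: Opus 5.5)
Both parts follow from the explicit Schur-complement form of Eq.~\eqref{eq:cig_reward}, namely $\exp(r_t) = K_{tt} + \sigma^2 d - \mathbf{k}_{<t}^\top \tilde{K}_{<t}^{-1}\mathbf{k}_{<t}$. Write $\lambda \coloneqq \sigma^2 d > 0$, so that $\tilde{K}_{<t} = K_{<t} + \lambda I_{t-1}$. This matrix is positive definite because $K_{<t} \succeq 0$, being a principal submatrix of $K = FF^\top$ from the proof of Theorem~\ref{thm:saturation}(ii). Part (i) is then immediate. If $\mathbf{k}_{<t} = \mathbf{0}$, the quadratic form vanishes and $\exp(r_t) = K_{tt} + \lambda$.

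For part (ii), substitute $\mathbf{k}_{<t} = K_{<t}\boldsymbol{\alpha}$ and $K_{tt} = \boldsymbol{\alpha}^\top K_{<t}\boldsymbol{\alpha}$. This gives
\[
\exp(r_t) = \lambda + \boldsymbol{\alpha}^\top\bigl(K_{<t} - K_{<t}(K_{<t}+\lambda I)^{-1}K_{<t}\bigr)\boldsymbol{\alpha}.
\]
The key step is to diagonalise $K_{<t} = U\,\mathrm{diag}(\kappa_i)\,U^\top$ with $\kappa_i \geq 0$. The middle matrix then has eigenvalues $\kappa_i - \kappa_i^2/(\kappa_i+\lambda) = \lambda\kappa_i/(\kappa_i+\lambda)$. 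Equivalently, it equals $\lambda K_{<t}(K_{<t}+\lambda I)^{-1}$, which can be checked directly from $K - K(K+\lambda I)^{-1}K = K(K+\lambda I)^{-1}\bigl((K+\lambda I) - K\bigr)$, using that $K$ commutes with $(K+\lambda I)^{-1}$.

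Each eigenvalue $\lambda\kappa_i/(\kappa_i+\lambda)$ lies in $[0,\lambda)$. Hence the middle matrix satisfies $0 \preceq \lambda K_{<t}(K_{<t}+\lambda I)^{-1} \preceq \lambda I$. Applying this to the quadratic form in $\boldsymbol{\alpha}$ gives $0 \le \boldsymbol{\alpha}^\top(\cdot)\boldsymbol{\alpha} \le \lambda\lVert\boldsymbol{\alpha}\rVert^2$. Adding $\lambda$ yields exactly the sandwich $\lambda \le \exp(r_t) \le \lambda(1+\lVert\boldsymbol{\alpha}\rVert^2)$ of Eq.~\eqref{eq:reward_sandwich}.

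The only real obstacle is identifying the residual matrix as $\lambda K_{<t}(K_{<t}+\lambda I)^{-1}$ and bounding its spectrum. A singular $K_{<t}$ needs no special care here, since the identity only inverts $K_{<t}+\lambda I$. The rest is routine.

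As an optional refinement, the upper bound can be sharpened to $\lambda\,\boldsymbol{\alpha}^\top P\boldsymbol{\alpha}$, where $P$ is the projector onto the range of $K_{<t}$. The remark would show that the bound is nearly tight only when $\boldsymbol{\alpha}$ has mass on large-eigenvalue directions. It is not needed for the statement.
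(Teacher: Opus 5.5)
Your proof is correct and follows essentially the same route as the paper. Both substitute the row-span hypothesis into the Schur complement, and your residual matrix $\lambda K_{<t}(K_{<t}+\lambda I)^{-1} = \lambda I - \lambda^2 \tilde{K}_{<t}^{-1}$ (with $\lambda=\sigma^2 d$) is exactly the paper's form $\exp(r_t)=\sigma^2 d\,\bigl(1+\lVert\boldsymbol{\alpha}\rVert^2-\sigma^2 d\,\boldsymbol{\alpha}^\top\tilde{K}_{<t}^{-1}\boldsymbol{\alpha}\bigr)$; your eigenvalue bound in $[0,\lambda)$ plays the role of the paper's Loewner bounds $0 \prec \tilde{K}_{<t}^{-1} \preceq \lambda^{-1} I$. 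One minor point on your optional remark: $\lambda\,\boldsymbol{\alpha}^\top P\boldsymbol{\alpha}$ bounds only the quadratic-form term, so the sharpened bound on $\exp(r_t)$ itself is $\lambda(1+\boldsymbol{\alpha}^\top P\boldsymbol{\alpha})$.
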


\begin{proof}
For~(i): $\mathbf{k}_{<t} = \mathbf{0}$ implies that the
subtraction in Eq.~\eqref{eq:cig_reward} vanishes, giving
$\exp(r_t) = K_{tt} + \sigma^2 d$.

For~(ii): substituting $\mathbf{k}_{<t} = K_{<t}\,\boldsymbol{\alpha}$
and $K_{tt} = \boldsymbol{\alpha}^\top K_{<t}\,\boldsymbol{\alpha}$
into Eq.~\eqref{eq:cig_reward} and simplifying yields
\begin{equation}\label{eq:schur_redundant}
  \exp(r_t)
  = \sigma^2 d\,\bigl(
    1 + \lVert\boldsymbol{\alpha}\rVert^2
    - \sigma^2 d\;\boldsymbol{\alpha}^\top
      \tilde{K}_{<t}^{-1}\,\boldsymbol{\alpha}
  \bigr).
\end{equation}
The upper bound follows from
$\tilde{K}_{<t} \succeq \sigma^2 d \cdot I_{t-1}$, which gives
$\boldsymbol{\alpha}^\top \tilde{K}_{<t}^{-1}\,\boldsymbol{\alpha}
\geq 0$. The lower bound follows from
$\tilde{K}_{<t}^{-1} \preceq (\sigma^2 d)^{-1} I_{t-1}$, which gives
$\sigma^2 d\;\boldsymbol{\alpha}^\top
\tilde{K}_{<t}^{-1}\,\boldsymbol{\alpha}
\leq \lVert\boldsymbol{\alpha}\rVert^2$.
\end{proof}

Part~(i) recovers the lifelong disagreement reward: with no
correlation to the prefix, the episodic correction is inactive and
$r_t$ equals the total disagreement at step~$t$, the same signal
used by Plan2Explore~\citep{sekarPlanningExploreSelfSupervised2020}. Part~(ii) shows that when
step~$t$ is fully redundant with the prefix, the epistemic
contribution $K_{tt}$ is removed and $\exp(r_t)$ is confined to a
band proportional to the aleatoric scale~$\sigma^2 d$. In the common
case of exact duplication ($\boldsymbol{\alpha} = \mathbf{e}_j$ for
some $j < t$), the band tightens to
$[\sigma^2 d,\; 2\sigma^2 d]$, confirming that repeated visits to
the same model gap yield a reward near the aleatoric floor
regardless of the magnitude of the disagreement.
\subsection{Characterisation of the Prefix-Redundancy Term}
\label{app:prefix_redundancy}

The prefix-redundancy correction $\mathbf{k}_{<t}^\top \tilde{K}_{<t}^{-1} \mathbf{k}_{<t}$ in Eq.~\eqref{eq:cig_reward} distinguishes CIG from per-step ensemble disagreement. Removing this term recovers the \emph{No Prefix Redundancy} ablation of \S\ref{sec:experiments}, whose reward reduces to $\log(K_{tt} + \sigma^2 d)$. This subsection asks two questions: (i)~where in the state space does the correction activate, and (ii)~how does that activation pattern evolve over training? Figure~\ref{fig:prefix_heatmaps} summarises the answers on AntMaze across five training windows.

\begin{figure}[t]
    \centering
    \includegraphics[width=\linewidth]{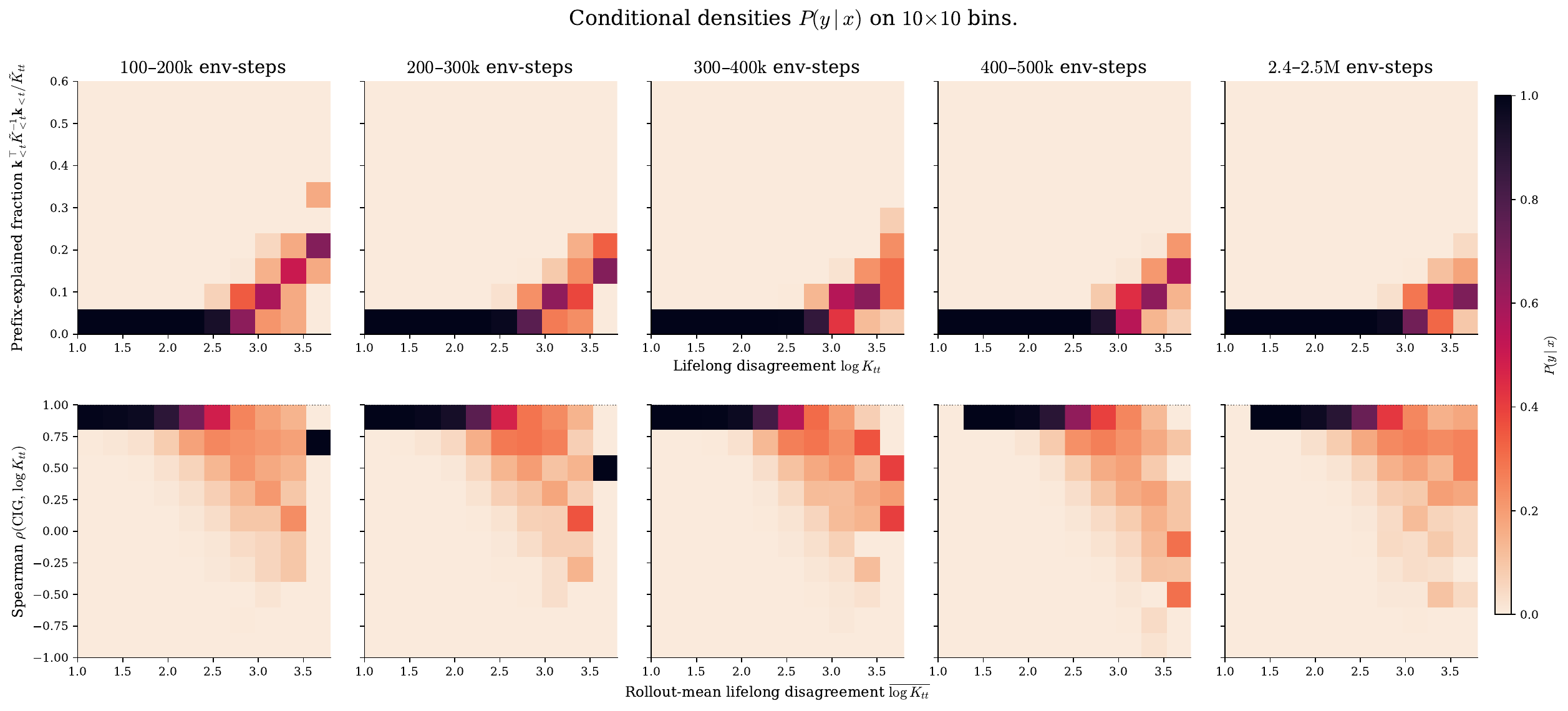}
    \caption{\textbf{Prefix-redundancy activation across training (AntMaze-medium).} Each panel shows the column-normalised conditional density $P(y \mid x_{\text{bin}})$ on a $10 \times 10$ grid. \textbf{Top row:} prefix-explained fraction $\mathbf{k}_{<t}^\top \tilde{K}_{<t}^{-1} \mathbf{k}_{<t} / K_{tt}$ (the share of the lifelong disagreement at step~$t$ accounted for by the prefix) vs.\ lifelong disagreement $\log K_{tt}$, the diagonal term of Eq.~\eqref{eq:cig_reward}. \textbf{Bottom row:} Spearman rank correlation $\rho(\mathrm{CIG},\,\log K_{tt})$ between the CIG reward and the lifelong disagreement within each rollout vs.\ rollout-mean lifelong disagreement $\overline{\log K_{tt}}$. Under the \emph{No Prefix Redundancy} ablation the top-row quantity is identically zero and the bottom-row correlation is identically one; departures from these values reflect off-diagonal structure in $K$ that CIG exploits.}
    \label{fig:prefix_heatmaps}
\end{figure}

\paragraph{Frontier selectivity at the step level.}
The top row of Figure~\ref{fig:prefix_heatmaps} plots the prefix-explained fraction $\mathbf{k}_{<t}^\top \tilde{K}_{<t}^{-1} \mathbf{k}_{<t} / K_{tt}$ against the lifelong disagreement $\log K_{tt}$, the diagonal entry of the kernel that constitutes the first term of Eq.~\eqref{eq:cig_reward}. The correction concentrates in the rightmost bins, where lifelong disagreement is high, and collapses to near zero for low-$K_{tt}$ steps. The mechanism is straightforward: the Cauchy--Schwarz inequality on the Gram matrix forces $|K_{jt}| \leq \sqrt{K_{jj}\, K_{tt}}$, so when both a prefix step $j$ and the current step $t$ lie in well-learned regions, the off-diagonal entries that feed the quadratic form are bounded by the geometric mean of two small quantities. The prefix-redundancy term therefore activates only when step~$t$ sits at the exploration frontier, where $K_{tt}$ is large enough for the off-diagonals to carry non-trivial alignment with the prefix.

This selectivity sharpens as training progresses. Over successive windows the conditional mass migrates upward and to the right: the prefix accounts for 10--25\% of the lifelong disagreement at frontier steps by 2.4--2.5M env-steps, while low-disagreement steps remain unaffected. The pattern is consistent with the ensemble concentrating its residual uncertainty on a narrowing frontier whose steps become increasingly correlated with one another.

\paragraph{Rank reordering at the rollout level.}
The bottom row of Figure~\ref{fig:prefix_heatmaps} examines the same effect at rollout granularity, plotting $\rho(\mathrm{CIG},\,\log K_{tt})$, the Spearman rank correlation between CIG and the lifelong disagreement $\log K_{tt}$ within each rollout, against rollout-mean lifelong disagreement $\overline{\log K_{tt}}$. Because the ridge $\sigma^2 d$ is constant across steps, $\log K_{tt}$ and $\log(K_{tt} + \sigma^2 d)$ are related by a strictly monotone transform; Spearman $\rho$ is invariant to such transforms, so the correlation is identical whether the reference ranking uses the \emph{Lifelong Only} variant ($\log K_{tt}$, which removes both the prefix correction and the ridge) or the \emph{No Prefix Redundancy} variant ($\log(K_{tt} + \sigma^2 d)$, which retains the ridge but drops the prefix correction).

When rollout-mean lifelong disagreement is low, $\rho$ clusters near one: the imagined trajectory stays in well-learned regions, the prefix-redundancy term is inactive, and CIG ranks steps identically to the lifelong disagreement alone. As $\overline{\log K_{tt}}$ increases, $\rho$ spreads across a wide range, with substantial mass at $\rho < 0.5$ and occasional rank reversals ($\rho < 0$). CIG reorders steps within a rollout only when the rollout visits the frontier, leaving the ranking unchanged in familiar territory. The effect grows over training, consistent with the narrowing frontier producing stronger inter-step correlations that the off-diagonal entries of $K$ capture.

\paragraph{Contrast with E3B$\times$P2E.}
CIG's frontier selectivity is a structural consequence of deriving both the lifelong and prefix terms from the same kernel $K$. The Cauchy--Schwarz coupling $|K_{jt}| \leq \sqrt{K_{jj}\, K_{tt}}$ ties the magnitude of the correction to the diagonal, forcing it to vanish quadratically when $K_{tt}$ is small. E3B$\times$P2E lacks this coupling. Its E3B factor scores each step against an embedding-space covariance $C_t^{-1}$ built from the encoder $\phi$, not from the ensemble's epistemic structure. In well-learned corridors where two states happen to have distant embeddings under $\phi$, E3B assigns a large novelty bonus; suppression relies entirely on the P2E factor being small, an indirect cancellation rather than a structural zero.

\paragraph{Formal statement.}
The quadratic vanishing observed in Figure~\ref{fig:prefix_heatmaps} can be made precise. Define the signal-to-noise ratio at step $t$ as $\epsilon_t = K_{tt} / (\sigma^2 d)$ and let $\epsilon = \max_{1 \leq t \leq T} \epsilon_t$. Write $r_t^{\mathrm{diag}} = \log(K_{tt} + \sigma^2 d)$ for the \emph{No Prefix Redundancy} reward.

\begin{proposition}[Frontier inactivity]\label{prop:frontier_inactivity}
For all $t \leq T$,
\begin{equation}\label{eq:frontier_inactivity}
    0 \;\leq\; r_t^{\mathrm{diag}} - r_t \;\leq\; -\log\!\bigl(1 - (t{-}1)\,\epsilon^2\bigr).
\end{equation}
When $(t{-}1)\,\epsilon^2 \leq 1/2$, this simplifies to $r_t^{\mathrm{diag}} - r_t \leq 2(t{-}1)\,\epsilon^2$.
\end{proposition}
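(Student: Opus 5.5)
The plan is to write the gap as a single logarithm and bound the prefix-redundancy quadratic form relative to the diagonal. Set $\lambda = \sigma^2 d$ and $q_t = \mathbf{k}_{<t}^\top \tilde{K}_{<t}^{-1}\mathbf{k}_{<t}$. By Eq.~\eqref{eq:cig_reward},
\[
r_t^{\mathrm{diag}} - r_t = -\log\!\Bigl(1 - \frac{q_t}{K_{tt}+\lambda}\Bigr).
\]
Both bounds in Eq.~\eqref{eq:frontier_inactivity} then reduce to statements about the ratio $q_t/(K_{tt}+\lambda)$.

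\textbf{Lower bound.} We need $q_t \ge 0$. This holds because $\tilde{K}_{<t} \succ 0$ (the ridge, as in Proposition~\ref{prop:surrogate_properties}), so $\tilde{K}_{<t}^{-1}\succ 0$. We also need the argument of the logarithm to be positive. This follows because $K_{tt}+\lambda-q_t$ is the Schur complement of $\tilde{K}_{<t}$ in the leading $t\times t$ block of $\tilde{K}\succ 0$, so it is strictly positive. Hence the ratio lies in $[0,1)$ and the gap is $\ge 0$.

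\textbf{Upper bound.} First use $\tilde{K}_{<t}\succeq \lambda I$, which gives $\tilde{K}_{<t}^{-1}\preceq \lambda^{-1}I$. Therefore
\[
q_t \le \lambda^{-1}\|\mathbf{k}_{<t}\|^2 = \lambda^{-1}\sum_{j<t}K_{jt}^2 .
\]
Next apply Cauchy--Schwarz on the Gram matrix $K=FF^\top$ from the proof of Theorem~\ref{thm:saturation}(ii): $K_{jt}^2\le K_{jj}K_{tt}$. With $K_{jj}\le \epsilon\lambda$ and $K_{tt}=\epsilon_t\lambda$, this gives
\[
q_t \le (t-1)\,\epsilon\,\epsilon_t\,\lambda .
\]
Dividing by $K_{tt}+\lambda=(1+\epsilon_t)\lambda$ yields
\[
\frac{q_t}{K_{tt}+\lambda}\le (t-1)\,\epsilon\,\frac{\epsilon_t}{1+\epsilon_t}.
\]

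\textbf{Main obstacle.} Getting $\epsilon^2$ rather than $\epsilon$ from the final factor. Bounding $\epsilon_t/(1+\epsilon_t)\le\epsilon_t\le\epsilon$ gives $(t-1)\epsilon^2$ directly, so the step is in fact routine. Since $-\log(1-x)$ is increasing on $[0,1)$, this gives the stated bound whenever $(t-1)\epsilon^2<1$. When $(t-1)\epsilon^2\ge 1$ the right-hand side is $+\infty$ (or undefined), and the statement should be read as vacuous there. I will note this convention explicitly.

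\textbf{Simplified form.} Use $-\log(1-x)\le 2x$ for $x\in[0,1/2]$. This holds because $g(x)=2x+\log(1-x)$ satisfies $g(0)=0$, is concave, and has $g(1/2)=1-\log 2>0$, so $g\ge 0$ on the interval. Substituting $x=(t-1)\epsilon^2$ gives $r_t^{\mathrm{diag}}-r_t\le 2(t-1)\epsilon^2$. The case $t=1$ is trivial, since the prefix is empty and both sides vanish.
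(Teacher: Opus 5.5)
Your proposal is correct and follows essentially the same route as the paper's proof. Both use nonnegativity of the quadratic form for the lower bound, then $\tilde{K}_{<t}^{-1}\preceq(\sigma^2 d)^{-1} I_{t-1}$, Cauchy--Schwarz on the Gram matrix, $\epsilon_t/(1+\epsilon_t)\le\epsilon$, monotonicity of $-\log(1-x)$, and $-\log(1-x)\le 2x$ on $[0,1/2]$. Your extra remarks tidy up points the paper leaves implicit: the Schur complement is positive, so the logarithm is defined, and the bound is vacuous when $(t-1)\epsilon^2\ge 1$.
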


\begin{proof}
\emph{Lower bound.} Because $K$ is a Gram matrix ($K = \frac{1}{M}\Delta^\top \Delta$, where column $t$ of $\Delta \in \mathbb{R}^{Md \times T}$ concatenates the deviation vectors $\boldsymbol{\delta}_k^{(t)}$ across members), $K$ is positive semidefinite. Hence $\tilde{K}_{<t} = K_{<t} + \sigma^2 d \cdot I_{t-1} \succ 0$ and $\mathbf{k}_{<t}^\top \tilde{K}_{<t}^{-1} \mathbf{k}_{<t} \geq 0$, giving $r_t \leq r_t^{\mathrm{diag}}$.

\emph{Upper bound.} Since $\tilde{K}_{<t} \succeq \sigma^2 d \cdot I_{t-1}$, we have $\tilde{K}_{<t}^{-1} \preceq (\sigma^2 d)^{-1}\, I_{t-1}$, so
\begin{equation}\label{eq:quad_form_bound}
    \mathbf{k}_{<t}^\top \tilde{K}_{<t}^{-1} \mathbf{k}_{<t} \;\leq\; \frac{\|\mathbf{k}_{<t}\|^2}{\sigma^2 d}.
\end{equation}
Cauchy--Schwarz on the Gram matrix gives $K_{jt}^2 \leq K_{jj}\, K_{tt}$ for all $j, t$. Summing over the prefix indices and substituting $K_{jj} \leq \epsilon\, \sigma^2 d$:
\begin{equation}\label{eq:norm_bound}
    \|\mathbf{k}_{<t}\|^2 = \sum_{j=1}^{t-1} K_{jt}^2 \;\leq\; K_{tt} \sum_{j=1}^{t-1} K_{jj} \;\leq\; (t{-}1)\,\epsilon\,\sigma^2 d \cdot K_{tt}.
\end{equation}
Combining Eqs.~\eqref{eq:quad_form_bound} and~\eqref{eq:norm_bound}:
\begin{equation}
    \frac{\mathbf{k}_{<t}^\top \tilde{K}_{<t}^{-1} \mathbf{k}_{<t}}{K_{tt} + \sigma^2 d} \;\leq\; \frac{(t{-}1)\,\epsilon\, K_{tt}}{K_{tt} + \sigma^2 d} \;=\; (t{-}1)\,\epsilon\,\frac{\epsilon_t}{1 + \epsilon_t} \;\leq\; (t{-}1)\,\epsilon^2.
\end{equation}
Writing $r_t^{\mathrm{diag}} - r_t = -\log\!\bigl(1 - \mathbf{k}_{<t}^\top \tilde{K}_{<t}^{-1} \mathbf{k}_{<t} / (K_{tt} + \sigma^2 d)\bigr)$ and applying monotonicity of $-\log(1-x)$ yields Eq.~\eqref{eq:frontier_inactivity}. The simplified bound uses $-\log(1-x) \leq 2x$ for $x \in [0,\,1/2]$.
\end{proof}

\section{Extended Experimental Results}
\label{app:extended_results}

\subsection{Baselines}
\label{sec:baselines}

This section briefly recaps the intrinsic-reward baselines used as
comparisons in our experiments. Each paragraph below summarises only
the per-step \emph{reward signal} that defines the method, since this
is what differs across baselines and what is directly comparable to
CIG; encoder, ensemble, and optimisation details follow the cited
papers, and the numerical settings we used are reported in
\S\ref{sec:hparams}.

\paragraph{ICM.}
Intrinsic Curiosity Module learns a state encoder $\phi$ jointly with
an inverse model that predicts the action from
$(\phi(s_t), \phi(s_{t+1}))$, so that $\phi$ retains only the part of
the observation that the agent's actions can affect. A forward model
$\hat{f}$ is then trained to predict the next-state encoding from
$(\phi(s_t), a_t)$, and its squared prediction error is taken as the
intrinsic reward:
\begin{equation}
r_t^{\text{ICM}} = \tfrac{1}{2}\,\big\lVert \hat{f}(\phi(s_t), a_t) - \phi(s_{t+1}) \big\rVert_2^2.
\end{equation}
The inverse-model objective is what gives ICM its nominal robustness
to action-irrelevant pixel noise, but the bonus itself is still a raw
forward prediction error and so remains susceptible to genuine
stochasticity in the transitions~\citep{pathakCuriosityDrivenExplorationSelfSupervised2017}.

\paragraph{RND.}
Random Network Distillation defines novelty against a randomly
initialised, \emph{fixed} target network $f^{*}$. A predictor network
$\hat{f}$ is trained to regress $f^{*}$ on visited states only, and
the residual error is the intrinsic reward:
\begin{equation}
r_t^{\text{RND}} = \big\lVert \hat{f}(s_t) - f^{*}(s_t) \big\rVert_2^2.
\end{equation}
States the predictor has been trained on yield small error, while
under-visited states retain large error, so the bonus operates as a
state-level novelty signal that is independent of the agent's actions
and of any learned dynamics model~\citep{burdaExplorationRandomNetwork2019}.

\paragraph{Plan2Explore (Disagreement).}
Plan2Explore equips a Dreamer world model with an ensemble of $M$
one-step latent predictors $\{f_k\}_{k=1}^{M}$ that all predict the
next-state embedding from the current latent and action. The
intrinsic reward is the average predictive variance across the
ensemble, taken over the embedding's $d$ dimensions:
\begin{equation}
r_t^{\text{P2E}} = \tfrac{1}{d}\sum_{i=1}^{d} \mathrm{Var}_{k}\big[f_k(s_t, a_t)_i\big].
\end{equation}
Because the ensemble members agree on aleatoric noise but disagree
where the world model has not yet generalised, this variance
approximates epistemic uncertainty over the next
embedding~\citep{sekarPlanningExploreSelfSupervised2020, pathakSelfSupervisedExplorationDisagreement2019}. CIG re-uses
exactly this ensemble; the difference is that P2E reads off only its
\emph{marginal} disagreement at each step, whereas CIG conditions on
prior steps within the rollout.

\paragraph{E3B.}
Elliptical Episodic Bonus (E3B) reuses ICM's inverse-dynamics
objective to train an encoder $\phi$, but its bonus is not a
prediction error. Within a single episode it maintains a running
inverse covariance $C_t^{-1} \in \mathbb{R}^{e\times e}$ of the
visited embeddings with $e$ being the embedding dimensionality, initialised to $(1/\lambda)\,I$, and scores each
step with the elliptical (Mahalanobis-style) bonus
\begin{equation}
r_t^{\text{E3B}} \;=\; \phi(s_t)^{\!\top}\, C_t^{-1}\, \phi(s_t),
\end{equation}
which is large when $\phi(s_t)$ points in a direction the current
episode has not yet covered. After each step $C_t^{-1}$ is updated by
a Sherman--Morrison rank-1 update and is reset at episode boundaries.
Unlike ICM and RND, novelty is therefore computed \emph{within} the
current episode rather than against a global
predictor~\citep{henaffExplorationEllipticalEpisodic2022}.

\paragraph{E3B $\times$ P2E.}
This baseline composes E3B's per-episode elliptical bonus with
Plan2Explore's lifelong ensemble disagreement multiplicatively,
\begin{equation}
r_t^{\text{E3B}\times\text{P2E}} \;=\; r_t^{\text{E3B}} \cdot r_t^{\text{P2E}},
\end{equation}
so the agent is rewarded only where \emph{both} signals fire: the
state must be novel relative to what this episode has already seen
(E3B) \emph{and} lie in a region the world-model ensemble disagrees
about (P2E). Following~\citet{henaffExplorationEllipticalEpisodic2022}, the encoder embedding is
$\ell_2$-normalised before entering the elliptical update so that the
per-step E3B bonus is bounded and the multiplicative product is
well-scaled. This combination serves as a simple non-CIG baseline
that already mixes episodic novelty with epistemic uncertainty.

\paragraph{APT.}
Active Pre-Training estimates the entropy of visited states
non-parametrically with a particle-based $k$-nearest-neighbour
estimator on the deterministic world-model state. For a particle
$h_t$ in the current batch, the reward is the log of the average
distance to its $k$ nearest neighbours $\mathcal{N}_k(h_t)$, which
approximates state entropy up to an additive constant:
\begin{equation}
r_t^{\text{APT}} = \log\!\left(1 + \tfrac{1}{k}\sum_{h_j \in \mathcal{N}_k(h_t)} \lVert h_t - h_j \rVert_2 \right).
\end{equation}
Unlike the other baselines, the APT bonus directly rewards visiting points that are far from the rest
of the batch in feature space, so the agent is pushed towards
under-represented regions of the state distribution rather than
towards regions where some predictor is currently
inaccurate~\citep{liuBehaviorVoidUnsupervised2021}.

\subsection{Per-Episode Entropy and Downstream Success on Minigrid}
\label{app:entropy_success}

\begin{figure}[t]
\centering
\includegraphics[width=\linewidth]{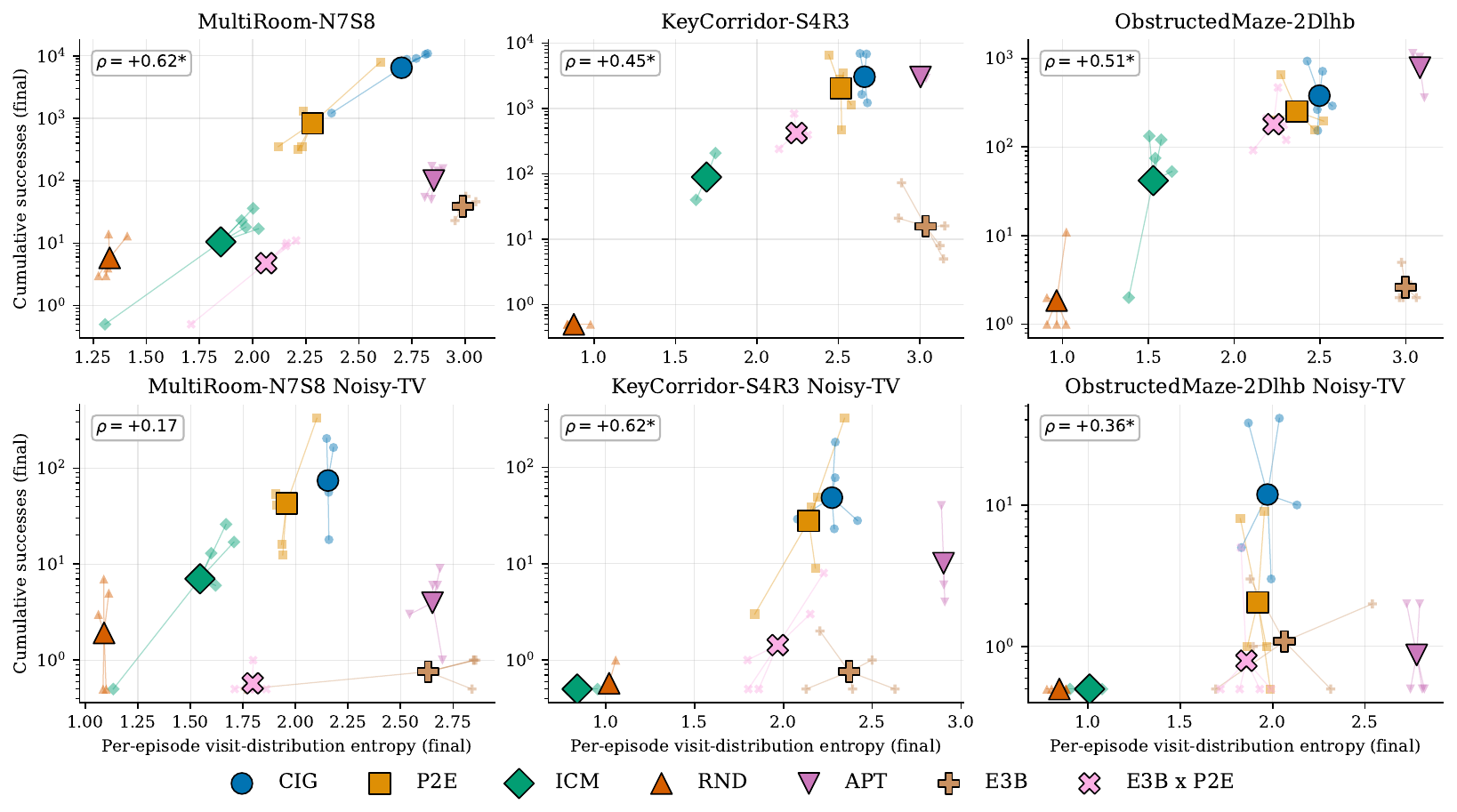}
\caption{%
Per-episode visit-distribution entropy (x-axis) versus final cumulative successes (y-axis, log scale) for six methods across three MiniGrid tasks in clean (top) and Noisy-TV variants (bottom). Large markers denote per-method means; small markers show individual seeds. Each panel also reports the Spearman rank correlation $\rho$.
}
\label{fig:entropy_vs_success}
\end{figure}

The sample-efficiency results in \S\ref{sec:exp_minigrid} leave open whether CIG's advantage arises from visiting more diverse states per episode or from directing visits toward epistemically informative regions. To disentangle these factors, we compute the entropy of the empirical visit-distribution over grid cells for each episode, average over episodes in the final 10\% of training, and plot the result against final cumulative successes in Figure~\ref{fig:entropy_vs_success}.

\paragraph{Per-method entropy ordering.}
Across all six panels, a consistent ordering emerges. E3B and APT achieve the highest per-episode entropy: E3B's episodic novelty bonus pushes the agent to visit cells it has not yet seen within the current episode, while APT's $k$-nearest-neighbour entropy estimate directly rewards uniform state coverage. CIG produces moderately higher entropy than P2E on every task, consistent with the log-determinant objective penalising within-rollout redundancy and thereby encouraging the policy to spread visits across distinct regions. RND and ICM cluster at lower entropy, reflecting their tendency to concentrate on narrow high-novelty frontiers rather than distributing visits broadly.

\paragraph{Entropy does not predict success.}
Despite their consistently high entropy, E3B and APT occupy opposite ends of the success axis depending on the task. E3B achieves the highest or near-highest entropy on five of six panels yet accumulates among the lowest cumulative successes everywhere, including the clean tasks. Its episodic novelty signal distributes visits uniformly but does not prioritise states where the dynamics model carries residual error, so broad coverage does not translate into informative data collection. APT follows a different pattern: its high entropy translates into strong performance on clean ObstructedMaze~2DlhB, where the task requires reaching many spatially distributed objects and uniform coverage aligns well with the downstream metric. On the remaining five panels, APT's entropy advantage yields no corresponding success advantage. Under distraction (bottom row), the dissociation is stark: APT retains the highest entropy on Drastic-NoisyTV ObstructedMaze (mean $\approx 2.75$) but accumulates near-zero successes, because the additional coverage is consumed by distractor states rather than task-relevant regions.

\paragraph{Targeted exploration as the driving factor.}
CIG's entropy is lower than APT's and E3B's on every task, yet CIG achieves the highest cumulative successes on five of six panels and remains competitive on the sixth (clean ObstructedMaze). The gap between CIG and P2E is instructive: CIG produces only modestly higher entropy than P2E, so the performance difference between them cannot be attributed to broader coverage alone. Instead, the prefix correction in Eq.~\eqref{eq:cig_reward} redirects visits away from uncertainty directions already probed earlier in the rollout, converting a small entropy increase into a larger gain in the diversity of epistemic information collected per episode. The pattern across all six panels supports the same conclusion: in MiniGrid, targeted reduction of dynamics-model uncertainty dominates raw state-coverage breadth as a predictor of downstream task success.
\subsection{Exploration selectivity on Puzzle \texorpdfstring{$3{\times}3$}{3x3}}
\label{app:puzzle_flips}
 
\begin{figure}[h]
\centering
\includegraphics[width=0.7\linewidth]{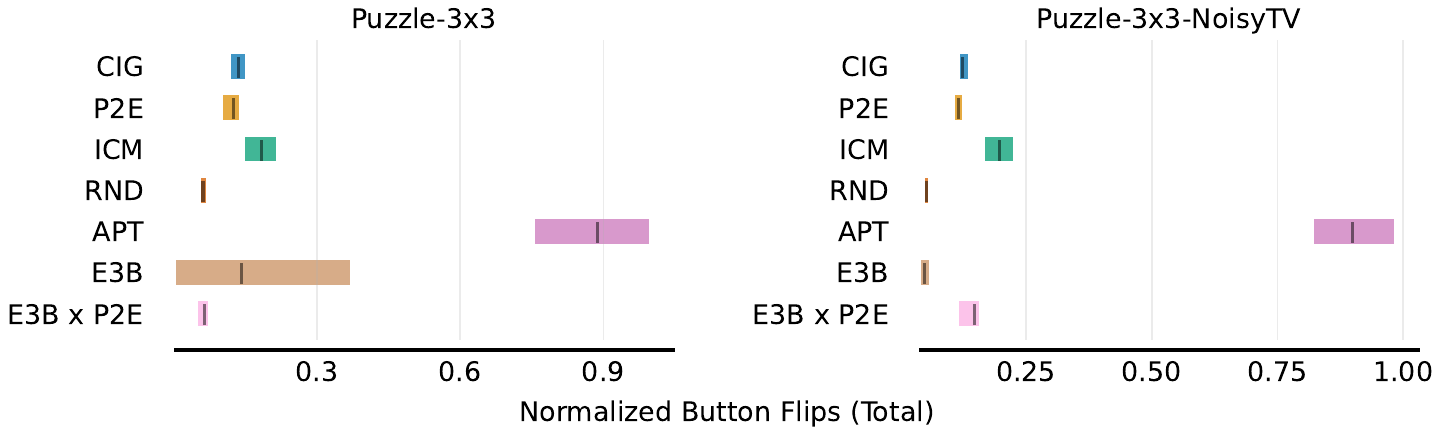}
\caption{Total button flips over training on Puzzle~$3{\times}3$ (clean). CIG, APT, and ICM reach comparable final unique-configuration coverage (Figure~\ref{fig:continuous}, top center), but APT accumulates roughly an order of magnitude more button flips than CIG.}
\label{fig:puzzle_flips}
\end{figure}
 
Figure~\ref{fig:puzzle_flips} reports total button flips for each method on clean Puzzle~$3{\times}3$. While CIG, APT, and ICM achieve similar final unique-configuration coverage (Figure~\ref{fig:continuous}), APT accumulates roughly $10{\times}$ more flips over the same step budget. APT's $k$-nearest-neighbour entropy proxy rewards visiting any under-represented state, driving the policy to press buttons indiscriminately; many flips revisit previously discovered configurations. CIG concentrates interactions on transitions whose ensemble disagreement is orthogonal to the rollout prefix, yielding new configurations at a higher rate per flip.

\section{Related Work}\label{sec:related_work}

Exploration under sparse or absent extrinsic reward relies almost exclusively on intrinsic motivation, where the agent constructs its own reward from model uncertainty, state novelty, or information gain~\citep{schmidhuber_formal_1991, 10.3389/neuro.12.006.2007}. The resulting design space is organised by what context the reward at step~$t$ conditions on: lifetime experience~$\mathcal{D}$, the trajectory prefix~$s_{<t}$, or both (§\ref{sec:bg_proxies}). Existing methods treat these contexts as alternatives or combine them heuristically; CIG derives a single reward that conditions on both.

\paragraph{Lifelong intrinsic rewards.}
Count-based methods generalise visitation counts to continuous spaces via pseudo-counts or density models~\citep{bellemareUnifyingCountBasedExploration2016, ostrovskiCountBasedExplorationNeural2017, lobelFlippingCoinsEstimate2023, yangExplorationAntiExplorationDistributional2024}. Prediction-error methods reward novelty through forward-model discrepancies: ICM~\citep{pathakCuriosityDrivenExplorationSelfSupervised2017} learns an inverse-dynamics model to obtain representations free of action-independent noise, and AMA~\citep{mavor-parkerHowStayCurious2022a} extends this to also filter action-dependent stochasticity. RND~\citep{burdaExplorationRandomNetwork2019} avoids the noisy-TV problem differently by distilling a fixed random network. Other lifelong approaches maximise state entropy~\citep{seoStateEntropyMaximization2021, liuBehaviorVoidUnsupervised2021}, use metric-based bonuses~\citep{wangRethinkingExplorationReinforcement2024}, or reward information gain about dynamics parameters~\citep{houthooftVIMEVariationalInformation2016}. Ensemble-based methods approximate epistemic uncertainty through forward-model disagreement~\citep{pathakSelfSupervisedExplorationDisagreement2019, shyamModelBasedActiveExploration2019, mazzagliaCuriosityDrivenExplorationLatent2022}. All these methods assign rewards per state independently, ignoring correlations within a trajectory (§\ref{sec:method}).

\paragraph{Episodic and hybrid intrinsic rewards.}
Episodic methods define novelty relative to states visited within the current episode~\citep{savinovEpisodicCuriosityReachability2019, joLECOLearnableEpisodic2022a, jiangEpisodicNoveltyTemporal2025}. E3B~\citep{henaffExplorationEllipticalEpisodic2022} is closest to CIG among these: it builds a covariance matrix over inverse-dynamics embeddings using Sherman--Morrison updates, so that each step's reward reflects how much the current state extends the subspace of earlier states. CIG's Cholesky factorisation of~$\mathbf{K}$ plays an analogous role, but E3B is purely episodic: its covariance is built from inverse-dynamics features and reset every episode, whereas CIG conditions on lifetime experience through the ensemble parameters, with off-diagonal entries correcting for within-rollout redundancy given that context. Since episodic bonuses alone ignore lifetime progress, several methods multiply or gate an episodic indicator by a lifelong bonus~\citep{badiaNeverGiveLearning2020, badiaAgent57OutperformingAtari2020, zhangNovelDSimpleEffective2021, raileanuRIDERewardingImpactDriven2020, fuGoImaginationMaximizing2023, zhaRankEpisodesSimple2021, zhangMADEExplorationMaximizing2021}, and empirical studies confirm that the combination improves over either component alone~\citep{henaffStudyGlobalEpisodic2023, jiangImportanceExplorationGeneralization2023}. The episodic components in these methods typically rely on discretised matching (hashing or binned embeddings) that degrades in continuous control domains where states rarely recur in the same bin. More fundamentally, all such hybrids set their combination weights heuristically, whereas CIG arrives at a hybrid signal as a consequence of a single trajectory-level information-gain objective.

\paragraph{Bayesian optimal experimental design.}
CIG's log-determinant surrogate (Eq.~\ref{eq:approx_objective}) instantiates D-optimal design~\citep{doi:10.1137/1.9780898719109} for deep ensembles. In the Bayesian Gaussian setting, D-optimality coincides with maximising expected information gain~\citep{10.1214/aoms/1177728069, 10.1214/ss/1177009939, alexanderian2023briefnotebayesiandoptimality}, the structural fact underlying our passage from the mutual information of Eq.~\eqref{eq:chain_rule} to the Gaussian bound of Eq.~\eqref{eq:gaussian_bound}. \citet{10.1214/23-STS915} survey computational advances in sequential Bayesian experimental design, including the chain-rule decomposition of EIG that parallels our Eq.~\eqref{eq:chain_rule}. Within RL, \citet{mehta2022an} framed model-based exploration as a BOED problem under GP dynamics. TIP~\citep{mehtaExplorationPlanningInformation2022} extended this to trajectory-level joint information gain and is the closest antecedent to CIG in motivation.

\paragraph{Trajectory-level information gain in RL.}
TIP computes joint EIG via log-determinants of GP covariance matrices, naturally penalising redundant state visits, but its reliance on Gaussian-process dynamics restricts it to low-dimensional MDPs. CIG arrives at an analogous objective through ensemble disagreement vectors in a learned latent space, avoiding GP inference entirely; the trace reduction (§\ref{sec:surrogate}) further addresses capacity saturation from finite ensemble size, a challenge absent from TIP's exact GP formulation. Plan2Explore~\citep{sekarPlanningExploreSelfSupervised2020} also uses ensemble disagreement in a learned world model but assigns per-step variance rewards, treating each step independently; \citet{caron2025on} provide theoretical guarantees for such per-step information-gain bonuses but likewise do not model within-trajectory correlations. CIG and Plan2Explore share an ensemble architecture but differ in objective: per-step variance versus trajectory-level information gain (Eq.~\ref{eq:approx_objective}), with the off-diagonal entries of~$\mathbf{K}$ capturing correlations that a per-step reward cannot represent. No existing method derives a per-step intrinsic reward that jointly conditions on lifetime experience and the trajectory prefix from a single information-theoretic objective; §\ref{sec:method} develops the CIG surrogate that achieves this.

\newpage
\section*{NeurIPS Paper Checklist}

\begin{enumerate}

\item {\bf Claims}
    \item[] Question: Do the main claims made in the abstract and introduction accurately reflect the paper's contributions and scope?
    \item[] Answer: \answerYes{} 
    \item[] Justification: We claim that we present a new intrinsic reward that performs better than the baselines on many environments. Our methods section derives this reward and our experiments show that indeed it performs much better on the selected environments and metrics.
    \item[] Guidelines:
    \begin{itemize}
        \item The answer \answerNA{} means that the abstract and introduction do not include the claims made in the paper.
        \item The abstract and/or introduction should clearly state the claims made, including the contributions made in the paper and important assumptions and limitations. A \answerNo{} or \answerNA{} answer to this question will not be perceived well by the reviewers. 
        \item The claims made should match theoretical and experimental results, and reflect how much the results can be expected to generalize to other settings. 
        \item It is fine to include aspirational goals as motivation as long as it is clear that these goals are not attained by the paper. 
    \end{itemize}

\item {\bf Limitations}
    \item[] Question: Does the paper discuss the limitations of the work performed by the authors?
    \item[] Answer: \answerYes{} 
    \item[] Justification: We discuss the limitations shortly in the conclusion section. Since CIG shares its architecture with Plan2Explore, but differs in the reward computation, we did only discuss the limitations that are unique to CIG and not general to ensemble-based intrinsic rewards. The main limitation is as stated scalability to longer horizons. Additionally, we clearly state the assumptions that have made in the methods section and the direct consequences of these assumptions are discussed there and in the appendix.
    \item[] Guidelines:
    \begin{itemize}
        \item The answer \answerNA{} means that the paper has no limitation while the answer \answerNo{} means that the paper has limitations, but those are not discussed in the paper. 
        \item The authors are encouraged to create a separate ``Limitations'' section in their paper.
        \item The paper should point out any strong assumptions and how robust the results are to violations of these assumptions (e.g., independence assumptions, noiseless settings, model well-specification, asymptotic approximations only holding locally). The authors should reflect on how these assumptions might be violated in practice and what the implications would be.
        \item The authors should reflect on the scope of the claims made, e.g., if the approach was only tested on a few datasets or with a few runs. In general, empirical results often depend on implicit assumptions, which should be articulated.
        \item The authors should reflect on the factors that influence the performance of the approach. For example, a facial recognition algorithm may perform poorly when image resolution is low or images are taken in low lighting. Or a speech-to-text system might not be used reliably to provide closed captions for online lectures because it fails to handle technical jargon.
        \item The authors should discuss the computational efficiency of the proposed algorithms and how they scale with dataset size.
        \item If applicable, the authors should discuss possible limitations of their approach to address problems of privacy and fairness.
        \item While the authors might fear that complete honesty about limitations might be used by reviewers as grounds for rejection, a worse outcome might be that reviewers discover limitations that aren't acknowledged in the paper. The authors should use their best judgment and recognize that individual actions in favor of transparency play an important role in developing norms that preserve the integrity of the community. Reviewers will be specifically instructed to not penalize honesty concerning limitations.
    \end{itemize}

\item {\bf Theory assumptions and proofs}
    \item[] Question: For each theoretical result, does the paper provide the full set of assumptions and a complete (and correct) proof?
    \item[] Answer: \answerYes{} 
    \item[] Justification: The paper is not a theory paper such that the derivations built upon well-known bounds and identities. The appendix also discusses some of the implications of the made assumptions and states the proof for the implications.
    \item[] Guidelines:
    \begin{itemize}
        \item The answer \answerNA{} means that the paper does not include theoretical results. 
        \item All the theorems, formulas, and proofs in the paper should be numbered and cross-referenced.
        \item All assumptions should be clearly stated or referenced in the statement of any theorems.
        \item The proofs can either appear in the main paper or the supplemental material, but if they appear in the supplemental material, the authors are encouraged to provide a short proof sketch to provide intuition. 
        \item Inversely, any informal proof provided in the core of the paper should be complemented by formal proofs provided in appendix or supplemental material.
        \item Theorems and Lemmas that the proof relies upon should be properly referenced. 
    \end{itemize}

    \item {\bf Experimental result reproducibility}
    \item[] Question: Does the paper fully disclose all the information needed to reproduce the main experimental results of the paper to the extent that it affects the main claims and/or conclusions of the paper (regardless of whether the code and data are provided or not)?
    \item[] Answer: \answerYes{} 
    \item[] Justification: The used hyper parameters and used environments are described in the appendix. We also have pseudo-code in the appendix.
    \item[] Guidelines:
    \begin{itemize}
        \item The answer \answerNA{} means that the paper does not include experiments.
        \item If the paper includes experiments, a \answerNo{} answer to this question will not be perceived well by the reviewers: Making the paper reproducible is important, regardless of whether the code and data are provided or not.
        \item If the contribution is a dataset and\slash or model, the authors should describe the steps taken to make their results reproducible or verifiable. 
        \item Depending on the contribution, reproducibility can be accomplished in various ways. For example, if the contribution is a novel architecture, describing the architecture fully might suffice, or if the contribution is a specific model and empirical evaluation, it may be necessary to either make it possible for others to replicate the model with the same dataset, or provide access to the model. In general. releasing code and data is often one good way to accomplish this, but reproducibility can also be provided via detailed instructions for how to replicate the results, access to a hosted model (e.g., in the case of a large language model), releasing of a model checkpoint, or other means that are appropriate to the research performed.
        \item While NeurIPS does not require releasing code, the conference does require all submissions to provide some reasonable avenue for reproducibility, which may depend on the nature of the contribution. For example
        \begin{enumerate}
            \item If the contribution is primarily a new algorithm, the paper should make it clear how to reproduce that algorithm.
            \item If the contribution is primarily a new model architecture, the paper should describe the architecture clearly and fully.
            \item If the contribution is a new model (e.g., a large language model), then there should either be a way to access this model for reproducing the results or a way to reproduce the model (e.g., with an open-source dataset or instructions for how to construct the dataset).
            \item We recognize that reproducibility may be tricky in some cases, in which case authors are welcome to describe the particular way they provide for reproducibility. In the case of closed-source models, it may be that access to the model is limited in some way (e.g., to registered users), but it should be possible for other researchers to have some path to reproducing or verifying the results.
        \end{enumerate}
    \end{itemize}

\item {\bf Open access to data and code}
    \item[] Question: Does the paper provide open access to the data and code, with sufficient instructions to faithfully reproduce the main experimental results, as described in supplemental material?
    \item[] Answer: \answerYes{} 
    \item[] Justification: We deliver the code to reproduce the results with detailed instructions how to setup the environment and run the code. 
    \item[] Guidelines:
    \begin{itemize}
        \item The answer \answerNA{} means that paper does not include experiments requiring code.
        \item Please see the NeurIPS code and data submission guidelines (\url{https://neurips.cc/public/guides/CodeSubmissionPolicy}) for more details.
        \item While we encourage the release of code and data, we understand that this might not be possible, so \answerNo{} is an acceptable answer. Papers cannot be rejected simply for not including code, unless this is central to the contribution (e.g., for a new open-source benchmark).
        \item The instructions should contain the exact command and environment needed to run to reproduce the results. See the NeurIPS code and data submission guidelines (\url{https://neurips.cc/public/guides/CodeSubmissionPolicy}) for more details.
        \item The authors should provide instructions on data access and preparation, including how to access the raw data, preprocessed data, intermediate data, and generated data, etc.
        \item The authors should provide scripts to reproduce all experimental results for the new proposed method and baselines. If only a subset of experiments are reproducible, they should state which ones are omitted from the script and why.
        \item At submission time, to preserve anonymity, the authors should release anonymized versions (if applicable).
        \item Providing as much information as possible in supplemental material (appended to the paper) is recommended, but including URLs to data and code is permitted.
    \end{itemize}

\item {\bf Experimental setting/details}
    \item[] Question: Does the paper specify all the training and test details (e.g., data splits, hyperparameters, how they were chosen, type of optimizer) necessary to understand the results?
    \item[] Answer: \answerYes{} 
    \item[] Justification: We state the we use 5 seeds for all of our experiments. We describe the environments and hyper parameters in the appendix. Additionally, there is code that can be used to trace the method. Our code mostly follows DreamerV2, a well-known model-based RL agent which additionally lessens the burden of understanding how the results are produced.
    \item[] Guidelines:
    \begin{itemize}
        \item The answer \answerNA{} means that the paper does not include experiments.
        \item The experimental setting should be presented in the core of the paper to a level of detail that is necessary to appreciate the results and make sense of them.
        \item The full details can be provided either with the code, in appendix, or as supplemental material.
    \end{itemize}

\item {\bf Experiment statistical significance}
    \item[] Question: Does the paper report error bars suitably and correctly defined or other appropriate information about the statistical significance of the experiments?
    \item[] Answer: \answerYes{} 
    \item[] Justification: We follow the recommendations of \cite{agarwalDeepReinforcementLearning2021} and report IQM with stratified-bootstrap confidence intervals, the mean and median of the results. This is the community accepted way to get statistical significant results in RL.
    \item[] Guidelines:
    \begin{itemize}
        \item The answer \answerNA{} means that the paper does not include experiments.
        \item The authors should answer \answerYes{} if the results are accompanied by error bars, confidence intervals, or statistical significance tests, at least for the experiments that support the main claims of the paper.
        \item The factors of variability that the error bars are capturing should be clearly stated (for example, train/test split, initialization, random drawing of some parameter, or overall run with given experimental conditions).
        \item The method for calculating the error bars should be explained (closed form formula, call to a library function, bootstrap, etc.)
        \item The assumptions made should be given (e.g., Normally distributed errors).
        \item It should be clear whether the error bar is the standard deviation or the standard error of the mean.
        \item It is OK to report 1-sigma error bars, but one should state it. The authors should preferably report a 2-sigma error bar than state that they have a 96\% CI, if the hypothesis of Normality of errors is not verified.
        \item For asymmetric distributions, the authors should be careful not to show in tables or figures symmetric error bars that would yield results that are out of range (e.g., negative error rates).
        \item If error bars are reported in tables or plots, the authors should explain in the text how they were calculated and reference the corresponding figures or tables in the text.
    \end{itemize}

\item {\bf Experiments compute resources}
    \item[] Question: For each experiment, does the paper provide sufficient information on the computer resources (type of compute workers, memory, time of execution) needed to reproduce the experiments?
    \item[] Answer: \answerYes{} 
    \item[] Justification: In the appendix we describe the machines that have been used to run the experiments and their typical runtime.
    \item[] Guidelines:
    \begin{itemize}
        \item The answer \answerNA{} means that the paper does not include experiments.
        \item The paper should indicate the type of compute workers CPU or GPU, internal cluster, or cloud provider, including relevant memory and storage.
        \item The paper should provide the amount of compute required for each of the individual experimental runs as well as estimate the total compute. 
        \item The paper should disclose whether the full research project required more compute than the experiments reported in the paper (e.g., preliminary or failed experiments that didn't make it into the paper). 
    \end{itemize}
    
\item {\bf Code of ethics}
    \item[] Question: Does the research conducted in the paper conform, in every respect, with the NeurIPS Code of Ethics \url{https://neurips.cc/public/EthicsGuidelines}?
    \item[] Answer: \answerYes{} 
    \item[] Justification: -
    \item[] Guidelines:
    \begin{itemize}
        \item The answer \answerNA{} means that the authors have not reviewed the NeurIPS Code of Ethics.
        \item If the authors answer \answerNo, they should explain the special circumstances that require a deviation from the Code of Ethics.
        \item The authors should make sure to preserve anonymity (e.g., if there is a special consideration due to laws or regulations in their jurisdiction).
    \end{itemize}

\item {\bf Broader impacts}
    \item[] Question: Does the paper discuss both potential positive societal impacts and negative societal impacts of the work performed?
    \item[] Answer: \answerNA{} 
    \item[] Justification: This work is foundational research on intrinsic reward mechanisms for reinforcement learning and is not tied to a specific application or deployment. We do not foresee a direct path from improved intrinsic motivation methods to particular positive or negative societal outcomes beyond those already associated with general advances in reinforcement learning.
    \item[] Guidelines:
    \begin{itemize}
        \item The answer \answerNA{} means that there is no societal impact of the work performed.
        \item If the authors answer \answerNA{} or \answerNo, they should explain why their work has no societal impact or why the paper does not address societal impact.
        \item Examples of negative societal impacts include potential malicious or unintended uses (e.g., disinformation, generating fake profiles, surveillance), fairness considerations (e.g., deployment of technologies that could make decisions that unfairly impact specific groups), privacy considerations, and security considerations.
        \item The conference expects that many papers will be foundational research and not tied to particular applications, let alone deployments. However, if there is a direct path to any negative applications, the authors should point it out. For example, it is legitimate to point out that an improvement in the quality of generative models could be used to generate Deepfakes for disinformation. On the other hand, it is not needed to point out that a generic algorithm for optimizing neural networks could enable people to train models that generate Deepfakes faster.
        \item The authors should consider possible harms that could arise when the technology is being used as intended and functioning correctly, harms that could arise when the technology is being used as intended but gives incorrect results, and harms following from (intentional or unintentional) misuse of the technology.
        \item If there are negative societal impacts, the authors could also discuss possible mitigation strategies (e.g., gated release of models, providing defenses in addition to attacks, mechanisms for monitoring misuse, mechanisms to monitor how a system learns from feedback over time, improving the efficiency and accessibility of ML).
    \end{itemize}
    
\item {\bf Safeguards}
    \item[] Question: Does the paper describe safeguards that have been put in place for responsible release of data or models that have a high risk for misuse (e.g., pre-trained language models, image generators, or scraped datasets)?
    \item[] Answer: \answerNA{} 
    \item[] Justification: The research is basic research in exploration in reinforcement learning. There is no immediate way to misuse the models or data.
    \item[] Guidelines:
    \begin{itemize}
        \item The answer \answerNA{} means that the paper poses no such risks.
        \item Released models that have a high risk for misuse or dual-use should be released with necessary safeguards to allow for controlled use of the model, for example by requiring that users adhere to usage guidelines or restrictions to access the model or implementing safety filters. 
        \item Datasets that have been scraped from the Internet could pose safety risks. The authors should describe how they avoided releasing unsafe images.
        \item We recognize that providing effective safeguards is challenging, and many papers do not require this, but we encourage authors to take this into account and make a best faith effort.
    \end{itemize}

\item {\bf Licenses for existing assets}
    \item[] Question: Are the creators or original owners of assets (e.g., code, data, models), used in the paper, properly credited and are the license and terms of use explicitly mentioned and properly respected?
    \item[] Answer: \answerYes{} 
    \item[] Justification: The original owners are cited in \ref{app:code}
    \item[] Guidelines:
    \begin{itemize}
        \item The answer \answerNA{} means that the paper does not use existing assets.
        \item The authors should cite the original paper that produced the code package or dataset.
        \item The authors should state which version of the asset is used and, if possible, include a URL.
        \item The name of the license (e.g., CC-BY 4.0) should be included for each asset.
        \item For scraped data from a particular source (e.g., website), the copyright and terms of service of that source should be provided.
        \item If assets are released, the license, copyright information, and terms of use in the package should be provided. For popular datasets, \url{paperswithcode.com/datasets} has curated licenses for some datasets. Their licensing guide can help determine the license of a dataset.
        \item For existing datasets that are re-packaged, both the original license and the license of the derived asset (if it has changed) should be provided.
        \item If this information is not available online, the authors are encouraged to reach out to the asset's creators.
    \end{itemize}

\item {\bf New assets}
    \item[] Question: Are new assets introduced in the paper well documented and is the documentation provided alongside the assets?
    \item[] Answer: \answerYes{} 
    \item[] Justification: The derived environments are clearly documented in the codebase.
    \item[] Guidelines:
    \begin{itemize}
        \item The answer \answerNA{} means that the paper does not release new assets.
        \item Researchers should communicate the details of the dataset\slash code\slash model as part of their submissions via structured templates. This includes details about training, license, limitations, etc. 
        \item The paper should discuss whether and how consent was obtained from people whose asset is used.
        \item At submission time, remember to anonymize your assets (if applicable). You can either create an anonymized URL or include an anonymized zip file.
    \end{itemize}

\item {\bf Crowdsourcing and research with human subjects}
    \item[] Question: For crowdsourcing experiments and research with human subjects, does the paper include the full text of instructions given to participants and screenshots, if applicable, as well as details about compensation (if any)? 
    \item[] Answer: \answerNA{} 
    \item[] Justification: -
    \item[] Guidelines:
    \begin{itemize}
        \item The answer \answerNA{} means that the paper does not involve crowdsourcing nor research with human subjects.
        \item Including this information in the supplemental material is fine, but if the main contribution of the paper involves human subjects, then as much detail as possible should be included in the main paper. 
        \item According to the NeurIPS Code of Ethics, workers involved in data collection, curation, or other labor should be paid at least the minimum wage in the country of the data collector. 
    \end{itemize}

\item {\bf Institutional review board (IRB) approvals or equivalent for research with human subjects}
    \item[] Question: Does the paper describe potential risks incurred by study participants, whether such risks were disclosed to the subjects, and whether Institutional Review Board (IRB) approvals (or an equivalent approval/review based on the requirements of your country or institution) were obtained?
    \item[] Answer: \answerNA{} 
    \item[] Justification: -
    \item[] Guidelines:
    \begin{itemize}
        \item The answer \answerNA{} means that the paper does not involve crowdsourcing nor research with human subjects.
        \item Depending on the country in which research is conducted, IRB approval (or equivalent) may be required for any human subjects research. If you obtained IRB approval, you should clearly state this in the paper. 
        \item We recognize that the procedures for this may vary significantly between institutions and locations, and we expect authors to adhere to the NeurIPS Code of Ethics and the guidelines for their institution. 
        \item For initial submissions, do not include any information that would break anonymity (if applicable), such as the institution conducting the review.
    \end{itemize}

\item {\bf Declaration of LLM usage}
    \item[] Question: Does the paper describe the usage of LLMs if it is an important, original, or non-standard component of the core methods in this research? Note that if the LLM is used only for writing, editing, or formatting purposes and does \emph{not} impact the core methodology, scientific rigor, or originality of the research, declaration is not required.
    \item[] Answer: \answerNA{} 
    \item[] Justification: -
    \item[] Guidelines:
    \begin{itemize}
        \item The answer \answerNA{} means that the core method development in this research does not involve LLMs as any important, original, or non-standard components.
        \item Please refer to our LLM policy in the NeurIPS handbook for what should or should not be described.
    \end{itemize}

\end{enumerate}

\end{document}